\newtheorem{theorem}{Theorem}[section]
\newtheorem{corollary}{Corollary}[theorem]
\newtheorem{lemma}[theorem]{Lemma}
\DeclareMathOperator*{\argmax}{arg\,max}
\DeclareMathOperator*{\argmin}{arg\,min}
\begin{document}


\title{Dictionary Learning and Sparse Coding on Statistical Manifolds}


%
%

\author{
Rudrasis~Chakraborty, Monami~Banerjee,
  and~Baba~C.~Vemuri \\ Department of
  Computer and Information Science and Engineering \\ University of
  Florida,~Gainesville, FL
}
\date{}
\maketitle

\begin{abstract}
In this paper, we propose a novel information theoretic framework for
dictionary learning (DL) and sparse coding (SC) on a statistical
manifold (the manifold of probability distributions). Unlike the
traditional DL and SC framework, our new formulation does not
explicitly incorporate any sparsity inducing norm in the cost function
being optimized but yet yields sparse codes. Our algorithm
approximates the data points on the statistical manifold (which are
probability distributions) by the weighted Kullback-Leibeler
center/mean (KL-center) of the dictionary atoms. The KL-center is
defined as the minimizer of the maximum KL-divergence between itself
and members of the set whose center is being sought. Further, we prove
that the weighted KL-center is a sparse combination of the dictionary
atoms. This result also holds for the case when the KL-divergence is
replaced by the well known Hellinger distance.
From an applications perspective, we present an extension of the
aforementioned framework to the manifold of symmetric positive
definite matrices (which can be identified with the manifold of zero
mean gaussian distributions), $\mathcal{P}_n$.  We present experiments
involving a variety of dictionary-based reconstruction and
classification problems in Computer Vision. Performance of the
proposed algorithm is demonstrated by comparing it to several
state-of-the-art methods in terms of reconstruction and
classification accuracy as well as sparsity of the chosen
representation.

\end{abstract}
\section{Introduction}\label{sec:introduction}

Dictionary learning and sparse coding have found wide applicability in
Image/Signal processing, Machine Learning and Computer Vision in
recent times. Examples applications include but are not limited to,
image classification
\cite{Mairal2009,qiu2014information}, image restoration
\cite{Wright2010} and face recognition \cite{Wright2009} and many
others.  The traditional dictionary learning (DL) and sparse coding
(SC) formulation assumes that the input data lie in a vector space,
and assumes a linear generative model for the data by approximating
the data with a sparse linear combination of the dictionary atoms
(elements).  Thus, the objective function of the DL problem typically
has a data fidelity term to minimize the ``reconstruction error'' in
the least squares sense. Sparsity is then enforced on the weights in
the linear combination via a tolerance threshold on the $\ell_0$-norm
of the weight vector. This however leads to an NP-hard problem and the
most popular approach for solving this problem (with no convergence
guarantees) is the K-SVD based approach \cite{aharon2005k}. For a
fixed dictionary, a convex approximation to the $\ell_0$-norm
minimization to induce sparsity can be achieved using the
$\ell_1$-norm constraint on the weight vector
\cite{candes2006stable,donoho2012sparse,akhtar2016discriminative}. The
problem of finding both the optimal dictionary and the sparse-codes
however remains to be a hard computational problem in general. For
further discussion on this topic and the problem of complete
dictionary recovery over a sphere, we refer the readers to
\cite{Sun2015}, where authors provide a provably convergent algorithm.

In many application domains however, the data do not reside in a
vector space, instead they reside on a Riemannian manifold such as the
Grassmannian \cite{chakraborty2015iccv,cetingul2009intrinsic}, the
hypersphere
\cite{MardiaBook,srivastava2007riemannian,salehian2015efficient}, the
manifold of symmetric positive definite (SPD) matrices
\cite{Moakher_simax05,LengletRDF_jmiv06,fletcher2007riemannian,sra2011generalized,Xie2013}
and many others. Generalizing the DL \& SC problem from the case of
vector space inputs to the case when the input data reside on a
Riemannian manifold is however difficult because of the nonlinear
structure of Riemannian manifolds \cite{Xie2013}. One could consider
embedding the Riemannian manifold into a Euclidean space, but a
problem with this method is that there does not exist a canonical
embedding for a general Riemannian manifold.  This motivated
researchers
\cite{Xie2013,Harandi2012,sra2011generalized,Li2013,zhang2017analytic}
to generalize the DL and SC problem to Riemannian manifolds. Though
the formulation on a Riemannian manifold involves a ``reconstruction
error'' term analogous to the vector space case, defining a sparsity
inducing constraint on a manifold is nontrivial and should be done
with caution. This is because, a Riemannian manifold lacks ``global''
vector space structure since it does not have the concept of a global
origin. Hence, as argued in \cite{Xie2013}, one way to impose the
sparsity inducing constraint is via an an affine constraint, i.e., the
sparsity constraint is over an affine subspace defined by the tangent
space at each data point on the manifold. We now briefly review a few
representative algorithms for the DL \& SC problem on Riemannian
manifolds.

A popular solution to the DL problem is to make use of the tangent
spaces, which are linear spaces associated with each point on a
Riemannian manifold. This approach essentially involves use of linear
approximation in the smooth neighborhood of a point.  Guo et
al. \cite{Guo2013} use a Log-Euclidean framework described at length
in \cite{arsigny2007geometric} to achieve a sparse linear
representation in the tangent space at the Fr\'{e}chet mean of the
data. Xie et al. \cite{Xie2013} developed a general dictionary
learning formulation that can be used for data on any Riemannian
manifold. In their approach, for the SC problem, authors use the
Riemannian Exponential (Exp.) and Logarithm (Log.) maps to define a
generative process for each data point involving a sparse combination
of the Log.-mapped dictionary atoms residing on the manifold. This
sparse combination is then realized on the manifold via the Exp.-map.
Their formulation is a direct generalization of the linear sparsity
condition with the exception of the origin of the linear space being
at the data point. Further, they impose an affine constraint in the
form of the weights in the weight vector summing to one. This
constraint implies the use of affine subspaces to approximate the
data. For fixed weights however, estimating the dictionary atoms is a
hard problem and a manifold line search method is used in their
approach. In another method involving DL and SC on the manifold of SPD
matrices, Cherian et al. \cite{Cherian2014} proposed an efficient
optimization technique to compute the sparse codes. Most recently,
authors in \cite{schmitz2017wasserstein} introduced a novel nonlinear DL
and SC method for histograms residing in a simplex. They use the well
known Wasserstein distance along with an entropy regularization
\cite{Cuturi} to reconstruct the histograms that are Wasserstein
barycenter approximations of the given data (histograms). They solve
the resulting optimization for both the dictionary atoms and weights
using a gradient based technique. Authors point out that using the
entropy regularization leads to a convex optimization
problem. However, authors did not discuss sparsity of the ensuing
Wasserstein barycenter dictionary based representation. Sparsity
property is of significant importance in many applications and the
focus of our work here is on how to achieve sparsity without
explicitly enforcing sparsity inducing constraints.

Several recent works report the use of kernels to accomplish
dictionary learning and sparse coding on Riemannian manifolds
\cite{Harandi2015,Li2013,Harandi2012}.  In these, the Riemannian
manifold is embedded into the Reproducing Kernel Hilbert Space
(RKHS). DL and SC problems are then formulated in the RKHS. RKHS is a
linear space, and hence it is easier to derive simple and effective
solutions for the DL and SC problems.  Recently, authors in
\cite{feragen2015geodesic} presented conditions that must be strictly
satisfied by geodesic exponential Kernels on general Riemannian
manifolds.  This important and significant result 
provides guidelines for designing a kernel based approach for general
Riemannian manifolds.


In this work, we present a novel formulation of the DL and SC problems
for data residing on a statistical manifold, without explicitly
enforcing a sparsity inducing constraint. The proposed formulation
circumvents the difficulty of directly defining a sparsity constraint
on a Riemannian manifold. Our formulation is based on an information
theoretic framework and is shown to yield sparse codes. Further, we
extend this framework to the manifold of SPD matrices. Note that SPD
matrices can be identified with the space of zero mean Gaussian
distributions, which is a statistical manifold.  Several experimental
results are presented that demonstrate the competitive performance of
our proposed algorithm in comparison to the state-of-the-art. 

The rest of the paper is organized as follows: in Section \ref{sec3},
we first present the conventional DL and SC problem formulation in
vector spaces and motivate the need for a new formulation of the DL
and SC problem on Riemannian manifolds. This is followed by a brief
summary of relevant mathematical background on statistical
manifolds. Following this, we summarize the mathematical results in
this paper and then present the details along with our algorithm for
the DL and SC problem. In Section \ref{sec4}, we present several
experimental results and comparisons to the state-of-the-art. Finally,
in Section \ref{sec5}, we draw conclusions.

\section{An Information Theoretic Formulation}\label{sec3}
In the traditional SC problem, a set of data vectors 
$X=\{\mathbf{x}_i\}_{i=1}^N \subset \mathbf{R}^n$, and a collection of
atoms, $A=\{\mathbf{a}_j\}_{j=1}^r \subset \mathbf{R}^n$, are
given. The goal is to express each $\mathbf{x}_i$ as a sparse linear
combination of atoms in $A$. Let, $A$ be the (overcomplete) dictionary
matrix of size $n \times r$ whose $i^{th}$ column consists of
$\mathbf{a}_i$. Let $W =[\mathbf{w}_1, \cdots, \mathbf{w}_N]$ be a $r
\times N$ matrix where each $\mathbf{w}_i \in \mathbf{R}^r$ consists
of the coefficients of the sparse linear combination. In the DL and SC
problem, the goal is to minimize the following objective function:
\begin{equation}
\min_{A, w_1, \cdots, w_n} \sum_{i=1}^n \|x_i - A w_i\|^2 + \mathbf
{Sp}(w_i), \label{EQ:One}
\end{equation}
Here, $\mathbf{Sp}(w_i)$ denotes the sparsity promoting term, which
can be either an $\ell_0$ norm or an $\ell_1$ norm.  Since, in the
above optimization problem, both the dictionary $A$ and the
coefficient matrix $W$ are unknown, it leads to a hard optimization
problem. As this optimization problem is computationally intractable
when the sparsity promoting term is an $\ell_0$ norm constraint, most
existing approaches use a convex relaxation to this objective using an
$\ell_1$ norm in place of the $\ell_0$ norm constraint when performing
the sparse coding.  Now, instead of the traditional DL \& SC setup
where data as well as atoms are vector valued, we address the problem
when each data point and the atom are probability densities, which are
elements of a statistical manifold (see formal definition below). In
this paper, we present a novel DL and SC framework for data residing
on a statistical manifold. Before delving into the details, we will
briefly introduce some pertinent mathematical concepts from
Differential Geometry and Statistical Manifolds and refer the reader
to \cite{do1992riemannian,amari1987differential} for details.

\subsection{Statistical Manifolds: Mathematical Preliminaries} \label{sec21}
Let $\mathcal{M}$ be a smooth ($C^{\infty}$) manifold
\cite{do1992riemannian}. We say that $\mathcal{M}$ is $n$-dimensional
if $\mathcal{M}$ is locally Euclidean of dimension $n$, i.e., locally
diffeomorphic to $\mathbf{R}^n$. Equipped with the {\emph Levi-Civita
  connection} $\nabla$, the triplet $(M, g, \nabla)$ is called a {\it
  statistical manifold} whenever both $\nabla$ and the dual
connection, $\nabla^*$ are torsion free
\cite{do1992riemannian,amari1987differential}.

A point on an $n$-dimensional {\it statistical manifold},
$\mathfrak{D}$ (from here on, we will use the symbol $\mathfrak{D}$ to
denote a {\it statistical manifold} unless specifically mentioned
otherwise), can be identified with a (smooth) probability distribution
function on a measurable topological space $\Omega$, denoted by
$P(\mathbf{x};\bm{\theta})$
\cite{suzuki2014information,amari1987differential}. Here, each
distribution function can be parametrized using $n$ real variables
$(\theta_1, \cdots, \theta_n)$. So, an open subset $S$ of a {\it
  statistical manifold}, $\mathfrak{D}$, is a collection of the
probability distribution functions on $\Omega$. And the chart map is
the mapping from $S$ to the {\it parameter space}, $\Theta =
\{\bm{\theta}\} \subset \mathbf{R}^n$. Let $\mu$ be a $\sigma$-finite
additive measure defined on a $\sigma$-algebra of subsets of
$\Omega$. Let $f(\mathbf{x};\bm{\theta})$ be the density of
$P(\mathbf{x};\bm{\theta})$ with respect to the measure $\mu$ and
assume the densities to be smooth $(C^{\infty})$ functions. Now, after
giving $\mathfrak{D}$ a topological structure, we can define a
Riemannian metric as follows. Let $l(\mathbf{x};\bm{\theta}) = \log
f(\mathbf{x};\bm{\theta})$, then a Riemannian metric, $g$ can be
defined as $g_{ij}(\bm{\theta}) = E_{\bm{\theta}} \left[
  \frac{\partial l(\mathbf{x};\bm{\theta})}{\partial \theta_i}
  \frac{\partial l(\mathbf{x};\bm{\theta})}{\partial
    \theta_j}\right]$, where $E_{\bm{\theta}}[\mathbf{y}]$ is the
expectation of $\mathbf{y}$ with respect to $\bm{\theta}$. In general,
$g=[g_{ij}]$ is symmetric and positive semi-definite. We can make $g$
positive definite by assuming the functions $\{\frac{\partial
  l(\mathbf{x};\bm{\theta})}{\partial \theta_i}\}_{i=1}^n$ to be
linearly independent. This metric is called the {\it Fisher-Rao}
metric \cite{rao2009fisher,shun2012differential} on $\mathfrak{D}$.

\subsection{Summary of the mathematical results}

In the next section, we propose an alternative formulation to the DL
and SC problem.  We first state a few theorems as background material
that will be used subsequently. Then, we define the new objective
function for the DL and SC problem posed on a statistical manifold in
Section \ref{sec111}. Our key mathematical results are stated in
Theorems \ref{thm4} and \ref{thm5}, Corollary \ref{cor1} and
\ref{cor2} respectively. Using these results, we show that our DL \&
SC framework, {\it which does not have an explicit sparsity
  constraint}, yields sparse codes. Then, we extend our DL and SC
framework to the manifold of SPD matrices, $\mathcal{P}_n$, in Section
\ref{sec112}.

\subsection{Detailed mathematical results}
Let the manifold of probability densities, hereafter denoted by
$\mathfrak{D}$ be the $n$-dimensional statistical manifold, i.e., each
point on $\mathfrak{D}$ is a probability density.  We will use the
following notations in the rest of the paper.
\begin{itemize}
 \item Let $\mathfrak{G}$ be a dictionary with $r$ atoms $g_1$,
   $\cdots,$ $g_r$, where each $g_i \in \mathfrak{D}$.
 \item Let $\mathfrak{F} = \{f_i\}_{i=1}^N$ $\subset \mathfrak{D}$ be
   a set of data points.
 \item And $w_{ij}$ be nonnegative weights corresponding to $i^{th}$
   data point and $j^{th}$ atom, $i \in \{1,\cdots, N\}$ and $j \in
   \{1, \cdots, r\}$.
\end{itemize}
Note that, here we assume that each density $f$ or $g$ is
parameterized by $\bm{\theta}$.  There are many ways to measure the
discrepancy between probability densities. One can choose an intrinsic
metric and the corresponding distance on a statistical manifold to
measure this discrepancy, such as the Fisher-Rao metric
\cite{rao2009fisher,shun2012differential}, which however is expensive to
compute. In this paper, we choose an extrinsic measure namely the
non-negative divergence measure called the Kullback-Leibler (KL)
divergence.  The KL divergence \cite{cover2012elements} between two
densities $f_1$ and $f_2$ on $\mathfrak{D}$ is defined by
\begin{equation}
\text{KL}(f_1, f_2) = \int f_1(x) \log \frac{f_1(x)}{f_2(x)} dx
\end{equation}
The Hessian of KL-divergence is the Fisher-Rao metric defined
earlier. In other words, the KL-divergence between two nearby
probability densities can be approximated by half of the squared
geodesic distance (induced by the Fisher-Rao metric) between them
\cite{shun2012differential}. {\it The KL-divergence is not a distance
  as it is not symmetric and does not satisfy the triangle
  inequality}. It is a special case of a broader class of divergences
called the $f$-divergences as well as of the well known class of
Bregman-divergences. We refer the reader to
\cite{basu1998robust,liese2006divergences} for more details in this
context.  Given a set of densities $\mathfrak{F} = \{f_i\}$, the KL
divergence from $\mathfrak{F}$ to a density $f$ can be defined by,
\begin{equation}
\label{neq1}
\text{KL}(\mathfrak{F}, f) = \displaystyle \max_i \text{KL}(f_i, f).
\end{equation}
We can define the {\it KL-center} of $\mathfrak{F}$, denoted by $f_m(\mathfrak{F})$, by
\begin{equation}
\label{eq2}
f_m(\mathfrak{F}) = \argmin_f \text{KL}(\mathfrak{F},f).
\end{equation}
The symmetrized KL divergence, also called the Jensen-Shannon
divergence (JSD) \cite{cover2012elements} between two densities $f_1$
and $f_2$ is defined by,
\begin{equation}
\label{eq11}
\text{JSD}(f_1, f_2) = \frac{1}{2} \text{KL}(f_1, f_2) + \frac{1}{2} \text{KL}(f_2, f_1).
\end{equation}
In general, given the set $\mathfrak{F}=\{f_i\}$, define a mixture of
densities as, $f = \sum_i \alpha_i f_i$, $\sum \alpha_i =1$, $\alpha_i
\geq 0$, $ \forall i$. It is evident that the set of $\{\alpha_i\}$
forms a simplex, which is denoted here by $\Delta$. Then, the JSD of
the set $\mathfrak{F}$ with the mixture weights $\{\alpha_i\}$ is
defined as,
\begin{equation}
\label{eq3}
\text{JSD}(\{f_i\}) = H(\sum_i \alpha_i f_i) - \sum_i \alpha_i H(f_i),
\end{equation}
where $H(f) = -\int f(x) \log f(x) dx$ is the Shannon entropy of the
density $f$. It is easy to see the following Lemma.
\begin{lemma}
$\text{JSD}(\{f_i\})$ is concave in $\{\alpha_i\}$ and JSD attains the
  minimum at an extreme point of the simplex $\Delta$.
\end{lemma}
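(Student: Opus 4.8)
The plan is to regard $\text{JSD}(\{f_i\})$ as a function of the weight vector $\bm{\alpha}=(\alpha_1,\ldots,\alpha_N)\in\Delta$ and to split it into the two pieces that appear in its definition. Writing $\psi(\bm{\alpha}) = H(\sum_i \alpha_i f_i) - \sum_i \alpha_i H(f_i)$, I note two structural facts: the mixture map $\bm{\alpha}\mapsto\sum_i \alpha_i f_i$ is affine (indeed linear) in $\bm{\alpha}$, and the subtracted term $\sum_i \alpha_i H(f_i)$ is linear in $\bm{\alpha}$ because the values $H(f_i)$ are constants independent of $\bm{\alpha}$.

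First I would establish concavity of the Shannon entropy as a \emph{functional} of the density. Since $H(f)=\int \eta(f(x))\,dx$ with $\eta(t)=-t\log t$ concave on $(0,\infty)$, integrating the pointwise inequality $\eta(\lambda p(x)+(1-\lambda)q(x))\ge \lambda\,\eta(p(x))+(1-\lambda)\,\eta(q(x))$ yields $H(\lambda p+(1-\lambda)q)\ge \lambda H(p)+(1-\lambda)H(q)$ for any densities $p,q$ and $\lambda\in[0,1]$. Composing this concave functional with the affine mixture map transports concavity to $\bm{\alpha}$: the convex combination $\lambda\bm{\alpha}+(1-\lambda)\bm{\beta}$ is sent to $\lambda\sum_i\alpha_i f_i+(1-\lambda)\sum_i\beta_i f_i$, to which the entropy inequality applies directly, so $\bm{\alpha}\mapsto H(\sum_i\alpha_i f_i)$ is concave. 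Adding the linear (hence concave) second term preserves concavity, and therefore $\psi$ is concave on $\Delta$.

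Second, I would invoke the standard principle that a concave function on a compact convex polytope attains its minimum at an extreme point, the extreme points of $\Delta$ being the vertices $e_i$. The justification is a one-line use of Jensen's inequality: writing an arbitrary $\bm{\alpha}=\sum_k\alpha_k e_k$ as a convex combination of vertices and applying concavity gives $\psi(\bm{\alpha})\ge\sum_k\alpha_k\psi(e_k)\ge\min_k\psi(e_k)$, so the infimum over $\Delta$ equals $\min_k\psi(e_k)$ and is attained at a vertex. Evaluating at $e_i$ gives $\sum_j\alpha_j f_j=f_i$, whence $\psi(e_i)=H(f_i)-H(f_i)=0$, identifying the minimum value as $0$.

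I do not anticipate a serious obstacle. The only point needing care is that the concavity of entropy must be formulated as concavity of a functional and then carried through the affine mixture map before the polytope minimum principle can be applied; this step is routine but should be written out explicitly. Finiteness and integrability of the entropies are assumed throughout from the smoothness hypotheses placed on the densities.
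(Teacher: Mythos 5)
Your proof is correct. Note, however, that the paper does not actually prove this lemma: it simply cites the thesis \cite{ericthesis}, so there is no in-paper argument to compare against. Your argument is the standard one and is exactly what such a citation points to: pointwise concavity of $\eta(t)=-t\log t$ integrates to concavity of $H$ as a functional; precomposing with the affine mixture map $\bm{\alpha}\mapsto\sum_i\alpha_i f_i$ and subtracting the linear term $\sum_i\alpha_i H(f_i)$ preserves concavity; and the vertex-minimum principle for concave functions on a polytope follows from a single application of Jensen's inequality, $\psi(\bm{\alpha})=\psi\left(\sum_k\alpha_k e_k\right)\geq\sum_k\alpha_k\psi(e_k)\geq\min_k\psi(e_k)$. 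Your additional observation that the minimum value is $0$ (attained at every vertex, since $\psi(e_i)=H(f_i)-H(f_i)=0$, and $\psi\geq 0$ by the same Jensen inequality) goes slightly beyond the stated lemma but is consistent with it and harmless. The one caveat you already flag — finiteness of the entropies $H(f_i)$ and of $H(\sum_i\alpha_i f_i)$ — is genuinely needed for the decomposition to make sense, and is covered by the paper's standing smoothness assumptions on the densities; with that in place, your self-contained argument fully discharges the lemma.
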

\begin{proof}
We refer the reader to \cite{ericthesis} for a proof of this Lemma. 
\end{proof}
In \cite{ericthesis}, it was shown that one can compute the {\it KL-
  center} of $\mathfrak{F}$, $f_m(\mathfrak{F})$, in Equation
\ref{eq2} using the following theorem:
\begin{theorem}
\label{thm2}
The {\it KL center} of $\mathfrak{F}$, $f_m(\mathfrak{F})$, is
given by
\begin{eqnarray*}
f_m(\mathfrak{F}) &=& \sum_i \hat{\alpha}_i f_i \\
\text{where  } \hat{\bm{\alpha}} &=& \argmax_{\bm{\alpha}} \text{JSD}(\{f_i\}).  
\end{eqnarray*}
\end{theorem}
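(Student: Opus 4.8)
The plan is to recognize Theorem \ref{thm2} as a minimax (saddle-point) identity that exchanges the roles of the reconstructing density $f$ and the mixture weights $\bm{\alpha}$, and then to evaluate the resulting inner optimization in closed form. The whole argument turns on the observation that the weighted KL sum is simultaneously linear in $\bm{\alpha}$ and convex in $f$.

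First I would rewrite the max-KL divergence of Equation \ref{neq1} as a maximization over the simplex $\Delta$. Since $\sum_i \alpha_i \text{KL}(f_i, f)$ is linear in $\bm{\alpha}$, its maximum over the compact convex set $\Delta$ is attained at a vertex, so that
\[
\text{KL}(\mathfrak{F}, f) = \max_i \text{KL}(f_i, f) = \max_{\bm{\alpha} \in \Delta} \sum_i \alpha_i \text{KL}(f_i, f),
\]
and hence $f_m(\mathfrak{F}) = \argmin_f \max_{\bm{\alpha} \in \Delta} \sum_i \alpha_i \text{KL}(f_i, f)$. Next I would interchange the min and the max. The payoff $(\bm{\alpha}, f) \mapsto \sum_i \alpha_i \text{KL}(f_i, f)$ is linear (hence concave and upper semicontinuous) in $\bm{\alpha}$ on the compact convex set $\Delta$, and it is convex (and lower semicontinuous) in $f$, because KL-divergence is convex in its second argument and a nonnegative combination preserves convexity. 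These are exactly the hypotheses of Sion's minimax theorem, which licenses
\[
\min_f \max_{\bm{\alpha} \in \Delta} \sum_i \alpha_i \text{KL}(f_i, f) = \max_{\bm{\alpha} \in \Delta} \min_f \sum_i \alpha_i \text{KL}(f_i, f).
\]

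I would then evaluate the inner minimization explicitly. Writing $\bar{f} = \sum_i \alpha_i f_i$ and expanding the KL-divergence yields
\[
\sum_i \alpha_i \text{KL}(f_i, f) = -\sum_i \alpha_i H(f_i) - \int \bar{f}(x) \log f(x)\, dx = -\sum_i \alpha_i H(f_i) + H(\bar{f}) + \text{KL}(\bar{f}, f).
\]
Because $\text{KL}(\bar{f}, f) \ge 0$ with equality iff $f = \bar{f}$ (Gibbs' inequality), the minimizer over $f$ is the mixture $\bar{f} = \sum_i \alpha_i f_i$, and the minimum value equals $H(\bar{f}) - \sum_i \alpha_i H(f_i) = \text{JSD}(\{f_i\})$ of Equation \ref{eq3}. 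Substituting this back gives $\min_f \text{KL}(\mathfrak{F}, f) = \max_{\bm{\alpha}} \text{JSD}(\{f_i\})$, with the optimal $f$ attained at $f = \sum_i \hat{\alpha}_i f_i$ where $\hat{\bm{\alpha}} = \argmax_{\bm{\alpha}} \text{JSD}(\{f_i\})$; the preceding Lemma guarantees this outer problem is a concave maximization over $\Delta$ with a well-defined maximizer. This is precisely the claimed form of the KL-center.

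The main obstacle I expect is the rigorous justification of the minimax interchange over the infinite-dimensional space of densities: one must confirm that the feasible set of $f$ can be taken convex, that the convexity and lower semicontinuity of $f \mapsto \text{KL}(f_i, f)$ hold in a suitable topology, and that a minimizer is attained. The attainment is in fact automatic here, since the inner minimizer $\bar{f}$ is exhibited explicitly; once the interchange is secured, the remaining steps are routine algebra.
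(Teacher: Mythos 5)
Your proposal is correct in substance, and it supplies something the paper itself does not: the paper gives no proof of Theorem \ref{thm2} at all, deferring entirely to \cite{ericthesis} (while noting, in its ``coding theory interpretation'' remark, that the statement is the classical redundancy--capacity theorem). Your route is precisely that classical duality argument: linearize $\max_i \text{KL}(f_i,f)$ as $\max_{\bm{\alpha}\in\Delta}\sum_i \alpha_i \text{KL}(f_i,f)$, interchange min and max via Sion, and evaluate the inner minimum through the identity $\sum_i \alpha_i \text{KL}(f_i,f) = \text{JSD}(\{f_i\}) + \text{KL}(\bar f, f)$ with $\bar f = \sum_i \alpha_i f_i$, Gibbs' inequality making $\bar f$ the unique inner minimizer. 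All of this is sound, and the decomposition identity is exactly the content of Eq. \ref{eq3}.

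Two refinements are worth making. First, there is a small logical gap at the end: Sion gives equality of optimal \emph{values}, and Gibbs gives that $\bar f$ attains $\min_f \sum_i \hat{\alpha}_i \text{KL}(f_i,f)$, but attaining the inner minimum at $\hat{\bm{\alpha}}$ does not by itself show that $\bar f$ attains the outer minimax $\min_f \max_i \text{KL}(f_i,f)$; one needs the saddle-point property. It closes in one line: if $f_m$ is any minimax minimizer with value $V$, then $V = \max_i \text{KL}(f_i,f_m) \ge \sum_i \hat{\alpha}_i \text{KL}(f_i,f_m) \ge \min_f \sum_i \hat{\alpha}_i \text{KL}(f_i,f) = V$, so $f_m$ also minimizes the weighted sum, and strictness in Gibbs' inequality forces $f_m = \bar f$. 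Second --- and this also disposes of the infinite-dimensional Sion worry you flag yourself --- the minimax theorem is avoidable altogether. The first-order (KKT) conditions for the concave program $\max_{\bm{\alpha}\in\Delta} \text{JSD}$, using $\partial\, \text{JSD}/\partial \alpha_i = \text{KL}(f_i,\bar f) - 1$, yield the equalization property: $\text{KL}(f_i,\bar f)$ equals the optimal JSD value for every $i$ with $\hat{\alpha}_i > 0$ and is at most that value otherwise, hence $\max_i \text{KL}(f_i,\bar f) = \text{JSD}$ at $\hat{\bm{\alpha}}$. Then for every density $f$, $\max_i \text{KL}(f_i,f) \ge \sum_i \hat{\alpha}_i \text{KL}(f_i,f) = \text{JSD} + \text{KL}(\bar f, f) \ge \max_i \text{KL}(f_i,\bar f)$, which proves the theorem directly by weak duality plus your own decomposition, with no topological hypotheses on the space of densities.
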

\begin{proof}
We refer the reader to \cite{ericthesis} for a proof of this theorem. 
\end{proof}

Observe that, the $\text{KL}(\mathfrak{F}, f)$ defined in
Eq. \ref{neq1} has the positive-definiteness property, i.e.,
$\text{KL}(\mathfrak{F}, f) \geq 0$ for any $\mathfrak{F}$ and $f$ and
$\text{KL}(\mathfrak{F}, f) = 0$ if and only if $f \in
\mathfrak{F}$. Both of these properties are evident from the
definition of the KL divergence between two densities.

{\bf Coding theory interpretation:} It should be noted that the above
result is the same as the well known redundance-capacity theorem of
coding theory presented in
\cite{gallager1968information,davisson1980source,ryabko1994fast,akhtar2016discriminative}. The
theorem establishes the equivalence of: the minmax excess risk i.e.,
redundancy for estimating a parameter $\theta$ from a family/class
$\Theta$ of sources, the Bayes risk associated with the least
favorable prior and the channel capacity when viewing statistical
modeling as a communication via a noisy channel. In
\cite{akhtar2016discriminative}, a stronger result was shown, namely
that, the capacity is also a lower bound for ``most'' sources in the
class. The results in \cite{ericthesis} however approached this
problem from a geometric viewpoint i.e., one of finding barycenters of
probability distributions using the KL-divergence as the ``distance''
measure. Our work presented here takes a similar geometric view point
to the problem at hand, namely the DL-SC problem.

Moving on, we now define the $\ell_p$ KL divergence, denoted by
$\text{KL}_p(\mathfrak{F}, f)$, $p > 0$ as:
\begin{equation}
\label{neq2}
\text{KL}_p(\mathfrak{F}, f) = \|\left(\text{KL}(f_1, f), \cdots, \text{KL}(f_N, f)\right)^t\|_p.
\end{equation} 
where, $\|.\|_p$ is the $\ell_p$ norm of the vector and $\mathfrak{F}
= \{f_i\}_{i=1}^N$. It is easy to prove the following property of
$\text{KL}_p(\mathfrak{F}, f)$ in the following Lemma.
\begin{lemma}
$\text{KL}_p(\mathfrak{F}, f)$ as defined in Eq. \ref{neq2} is a
  well-defined {\it statistical divergence} for any $p >
  0$. Furthermore, the KL divergence as defined in Eq. \ref{neq1} is a
  special case of $\text{KL}_p$ when $p =\infty$.
\end{lemma}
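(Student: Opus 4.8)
The plan is to verify the two defining properties of a statistical divergence --- non-negativity and positive-definiteness (identity of indiscernibles) --- directly from the componentwise behavior of the ordinary KL-divergence, and then to identify the $p=\infty$ case as the maximum case recorded in Eq. \ref{neq1}. Throughout I would treat $\text{KL}_p(\mathfrak{F}, f) = \left(\sum_{i=1}^N \text{KL}(f_i,f)^p\right)^{1/p}$ as the $\ell_p$ functional applied to the vector $\mathbf{v} = \left(\text{KL}(f_1,f), \cdots, \text{KL}(f_N,f)\right)^t \in \mathbf{R}^N$, whose entries are the individual KL-divergences. Non-negativity is then immediate: each entry $\text{KL}(f_i, f) \geq 0$ by Gibbs' inequality (equivalently, by Jensen applied to the convex map $-\log$), so $\mathbf{v}$ has non-negative entries and $\|\mathbf{v}\|_p = \left(\sum_i |v_i|^p\right)^{1/p} \geq 0$ for every $p>0$. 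This simply reuses the componentwise non-negativity already noted after Eq. \ref{neq1}.

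Next I would establish positive-definiteness. Since every entry of $\mathbf{v}$ is non-negative, the equation $\sum_i v_i^p = 0$ forces each $v_i = 0$; hence $\text{KL}_p(\mathfrak{F}, f) = 0$ holds if and only if $\text{KL}(f_i, f) = 0$ for every $i$, which by the identity-of-indiscernibles property of the ordinary KL-divergence holds if and only if $f = f_i$ for all $i$, i.e. $f \in \mathfrak{F}$ (all atoms coincide with $f$). This is exactly the positive-definiteness condition recorded for Eq. \ref{neq1}, now transported to $\text{KL}_p$, so the pair (non-negativity, positive-definiteness) certifies that $\text{KL}_p$ is a well-defined statistical divergence. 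For the $p=\infty$ assertion I would invoke the standard limiting fact $\lim_{p\to\infty}\|\mathbf{v}\|_p = \|\mathbf{v}\|_\infty = \max_i |v_i|$; because the entries are non-negative, $\max_i |\text{KL}(f_i,f)| = \max_i \text{KL}(f_i,f) = \text{KL}(\mathfrak{F},f)$ by Eq. \ref{neq1}, giving the identification $\text{KL}_\infty(\mathfrak{F}, f) = \text{KL}(\mathfrak{F}, f)$.

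The main obstacle --- indeed the only subtle point --- is that for $0 < p < 1$ the functional $\|\cdot\|_p$ is not a genuine norm: it violates the triangle inequality and is not convex in $\mathbf{v}$. I would therefore be careful to claim only what the notion of a statistical divergence actually requires, namely non-negativity and positive-definiteness, both of which rest solely on the facts that $\|\mathbf{v}\|_p \geq 0$ and that $\|\mathbf{v}\|_p = 0$ exactly when $\mathbf{v} = 0$; neither of these uses the triangle inequality or convexity. Avoiding any norm axiom beyond positive-definiteness keeps the divergence conclusion valid uniformly across the whole range $p>0$ and at the limiting value $p=\infty$, which is precisely what the lemma asserts.
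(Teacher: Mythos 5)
Your proposal is correct, and in fact the paper supplies no proof at all for this lemma --- it is introduced with ``It is easy to prove\ldots'' and left to the reader --- so your argument fills a gap rather than diverging from an existing one. What you wrote is the natural intended argument: non-negativity of $\|\mathbf{v}\|_p$ from componentwise non-negativity of KL (Gibbs' inequality), positive-definiteness from the fact that a sum of non-negative $p$-th powers vanishes only when every entry does, and the identification at $p=\infty$ via $\lim_{p\to\infty}\|\mathbf{v}\|_p = \max_i |v_i|$ (or simply the definition of the $\ell_\infty$ norm). Two of your side remarks are genuinely valuable beyond what the paper says. First, your observation that for $0<p<1$ the functional $\|\cdot\|_p$ fails the triangle inequality, so the claim must rest only on non-negativity and identity of indiscernibles, is exactly the point that makes the lemma true ``for any $p>0$'' rather than only for $p\geq 1$; a statistical divergence, unlike a metric, requires neither symmetry nor the triangle inequality, as the paper itself notes for KL. Second, you correctly sharpen the equality condition: $\text{KL}_p(\mathfrak{F},f)=0$ forces $f=f_i$ for \emph{every} $i$, i.e.\ all atoms coincide with $f$, whereas the paper's earlier assertion that $\text{KL}(\mathfrak{F},f)=0$ if and only if $f\in\mathfrak{F}$ is loose --- if $f$ equals one $f_i$ but the $f_j$ are distinct, some $\text{KL}(f_j,f)>0$ and the divergence is strictly positive. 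Your parenthetical ``(all atoms coincide with $f$)'' is the correct reading, and stating it explicitly improves on the paper.
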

Without any loss of generality, we will assume $p=1$ and refer
$\ell_1$ KL-center to be simply the KL-center for the rest of the paper
(unless mentioned otherwise). Now, given the set of densities
$\mathfrak{F} = \{f_i\}_{i=1}^N$ and a set of weights
$\{\alpha_i\}_{i=1}^N$, we can define the {\it weighted KL-center},
denoted by $f_m(\mathfrak{F}, \{\alpha_i\})$ as follows:
\begin{equation}
\label{eq4}
f_m(\mathfrak{F}, \{\alpha_i\}) = \argmin_f \sum_i\alpha_i\text{KL}(f_i, f).
\end{equation}
We like to point out that it is however easy to see that the
$\ell_{\infty}$ KL-center can not be generalized to the corresponding
weighted KL center. The above defined weighted KL-center has the
following nice property:
\begin{lemma}
The weighted KL-center as defined in Eq. \ref{eq4} is a generalization
of the KL-center in Eq. \ref{eq2} (with $p = 1$). The KL-center can be
obtained from the weighted KL-center by substituting $\alpha_i = 1/N$,
for all $i$.
\end{lemma}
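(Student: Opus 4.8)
The plan is to verify the claim directly from the definitions, since the statement amounts to recognizing that the uniform weight vector is a single point of the weight simplex and that a positive rescaling of a minimization objective leaves its minimizer unchanged. First I would recall that, under the convention $p=1$ adopted just above, the (unweighted) KL-center is the minimizer of $\text{KL}_1(\mathfrak{F}, f)$. Because every coordinate $\text{KL}(f_i, f)$ of the vector appearing in Eq. \ref{neq2} is nonnegative by the positive-definiteness property of the KL divergence noted after Theorem \ref{thm2}, its $\ell_1$ norm collapses to the ordinary sum, so that $\text{KL}_1(\mathfrak{F}, f) = \sum_i \text{KL}(f_i, f)$ and hence $f_m(\mathfrak{F}) = \argmin_f \sum_i \text{KL}(f_i, f)$.

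Next I would substitute $\alpha_i = 1/N$ into the weighted objective of Eq. \ref{eq4}, which yields $\sum_i \alpha_i \text{KL}(f_i, f) = \frac{1}{N}\sum_i \text{KL}(f_i, f)$. Since $1/N$ is a strictly positive constant independent of $f$, multiplying the objective by it cannot change the location of the minimizer; formally, $\argmin_f c\, h(f) = \argmin_f h(f)$ for any $c > 0$. Applying this with $c = 1/N$ and $h(f) = \sum_i \text{KL}(f_i, f)$ gives $f_m(\mathfrak{F}, \{1/N\}) = \argmin_f \sum_i \text{KL}(f_i, f) = f_m(\mathfrak{F})$, which is exactly the asserted recovery of the KL-center from the weighted KL-center.

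Finally, for the generalization claim I would observe that the weight vector $\{\alpha_i\}$ ranges over the simplex $\Delta$, so the weighted KL-center in Eq. \ref{eq4} is really a family of objectives parametrized by this simplex, and the uniform choice $\alpha_i = 1/N$ is simply one distinguished interior point of $\Delta$. Thus the KL-center of Eq. \ref{eq2} (read with $p=1$) is precisely the special case of the weighted KL-center obtained at that point, establishing that the latter is a genuine generalization of the former. I do not anticipate any real obstacle here: the only two points requiring any care are the reduction of the $\ell_1$ norm to a sum, which relies on the nonnegativity of each KL term, and the invariance of the $\argmin$ under positive scaling, both of which are immediate.
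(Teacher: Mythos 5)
Your proposal is correct: the reduction of the $\ell_1$ norm in Eq.~\ref{neq2} to $\sum_i \text{KL}(f_i,f)$ via nonnegativity of each KL term, followed by the invariance of $\argmin_f$ under multiplication by the positive constant $1/N$, is exactly the intended justification. The paper itself states this lemma without proof, treating it as immediate from the definitions, so your argument fills in precisely the omitted verification and takes no different route.
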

\begin{theorem}
\label{thm3}
Given $\mathfrak{F}$ and $\{\alpha_i\}$ as above, $f_m(\mathfrak{F}, \{\alpha_i\}) = \sum_i \alpha_i f_i$
\end{theorem}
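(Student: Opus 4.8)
The plan is to reduce the minimization to a form where the positive-definiteness of the KL divergence (Gibbs' inequality) pins down the minimizer directly. First I would write out the objective $J(f) = \sum_i \alpha_i \text{KL}(f_i, f)$ using the definition of the KL divergence and split each term as $\text{KL}(f_i, f) = \int f_i \log f_i \, dx - \int f_i \log f \, dx$. The first piece, $\sum_i \alpha_i \int f_i \log f_i \, dx$, is a weighted negative entropy of the data densities and does \emph{not} depend on the optimization variable $f$, so it can be dropped. This leaves the problem of minimizing $-\sum_i \alpha_i \int f_i \log f \, dx$, and by linearity of the integral together with $\alpha_i \geq 0$ and $\sum_i \alpha_i = 1$, this equals $-\int \bar f \log f \, dx$ where $\bar f := \sum_i \alpha_i f_i$. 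I would note in passing that $\bar f$ is a legitimate probability density, being a convex combination of densities, so it belongs to the class over which we optimize.

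The heart of the argument is then the observation that $-\int \bar f \log f \, dx$ is exactly the cross-entropy between $\bar f$ and $f$, which is minimized precisely at $f = \bar f$. I would make this rigorous by writing the gap $\int \bar f \log \bar f \, dx - \int \bar f \log f \, dx = \int \bar f \log(\bar f / f)\, dx = \text{KL}(\bar f, f)$ and invoking the positive-definiteness of the KL divergence: $\text{KL}(\bar f, f) \geq 0$ with equality if and only if $f = \bar f$ almost everywhere. Hence $-\int \bar f \log f \, dx \geq -\int \bar f \log \bar f \, dx$ for every candidate density $f$, with equality exactly at $f = \bar f$. This shows $f_m(\mathfrak{F}, \{\alpha_i\}) = \bar f = \sum_i \alpha_i f_i$, and as a bonus that the minimizer is unique (almost everywhere).

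It is worth remarking that the same result follows from a constrained variation: minimizing $J(f)$ subject to $\int f \, dx = 1$ via a Lagrange multiplier $\lambda$ and setting the functional derivative of $-\int \bar f \log f \, dx + \lambda \int f \, dx$ to zero yields $\bar f(x)/f(x) = \lambda$, and normalization forces $\lambda = 1$, i.e. $f = \bar f$. Since $-\log f$ is convex, $J$ is convex in $f$, so this stationary point is the global minimizer. I would nonetheless favor the Gibbs-inequality route for the main proof, because it is self-contained, yields the equality/uniqueness condition for free, and avoids having to justify differentiation under the integral sign and second-order optimality.

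The main obstacle is not the algebra but ensuring the manipulations are valid: that the separated entropy integrals are finite (or that the splitting is otherwise legitimate), that summation and integration may be interchanged, and that $\bar f$ lies in the admissible class. These are mild regularity conditions already implicit in the smoothness hypotheses on the densities stated in Section \ref{sec21}, so I would state them explicitly at the outset and then carry out the two-step reduction above.
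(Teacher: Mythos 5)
Your proof is correct, but it takes a genuinely different route from the paper's. The paper works in the discrete setting and verifies only first-order stationarity: it forms the Lagrangian $\sum_i\alpha_i\,\text{KL}(f_i,f)+\lambda\bigl(\sum_j f(x_j)-1\bigr)$, differentiates in each $f(x_j)$ to obtain $f(x_j)=\sum_i\alpha_i f_i(x_j)/\lambda$, fixes $\lambda=1$ via the normalization constraint, and then asserts that the continuous case follows by replacing sums with integrals --- essentially the computation you relegate to your closing remark. Your main argument instead rests on the decomposition $\sum_i\alpha_i\,\text{KL}(f_i,f)=\sum_i\alpha_i\,\text{KL}(f_i,\bar f)+\text{KL}(\bar f,f)$ with $\bar f=\sum_i\alpha_i f_i$ (equivalently, minimizing the cross-entropy $-\int\bar f\log f\,dx$ via Gibbs' inequality). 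This buys several things the paper's proof does not: it treats discrete and continuous densities uniformly rather than by an informal extension; it certifies \emph{global} optimality, whereas the paper never checks convexity or second-order conditions and so only exhibits a stationary point of the Lagrangian; and the equality case of $\text{KL}(\bar f,f)\geq 0$ yields almost-everywhere uniqueness of the minimizer for free. What the paper's route buys is brevity, no need to guess the candidate minimizer in advance (your argument must name $\bar f$ up front, a harmless cost), and a Lagrangian/KKT template that is reused in the proof of Theorem \ref{thm4}. Your flagged regularity caveats --- finiteness of the separated entropy integrals and the interchange of summation and integration --- are exactly the hypotheses the paper leaves implicit, and stating them explicitly strengthens rather than weakens the argument.
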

\begin{proof}
For simplicity, assume that each $f_i$ is discrete and assume that it
can take on $k$ discrete values, $x_1, \cdots, x_k$. Then, consider
the minimization of $\sum_i\text{KL}(f_i, f)$ with respect to $f$
subject to the constraint that $f$ is a density, i.e., for the
discrete case, $\sum_j f(x_j)=1$. By using a Lagrange multiplier
$\lambda$, we get,
\begin{eqnarray*}
\frac{\partial}{\partial f(x_j)} \left\{\sum_i\alpha_i\text{KL}(f_i, f) + \lambda \left(\sum_j f(x_j) - 1\right)\right\} &=& 0 , \:\forall j\\
\implies \left(\lambda - \frac{\sum_i \alpha_i f_i(x_j)}{f(x_j)}\right) &=& 0, \: \forall j \\
\implies f(x_j) = \frac{\sum_i \alpha_i f_i(x_j)}{\lambda}, \: \forall j.
\end{eqnarray*}
Now, by taking $\frac{\partial}{\partial \lambda}
\left\{\sum_i\alpha_i\text{KL}(f_i, f) + \lambda \left(\sum_j f(x_j) -
1\right)\right\}$ and equating to $0$, we get $f(x_j) = \sum_i
\alpha_i f_i(x_j)$, $\forall j$. Thus, $f = \sum_i \alpha_i f_i$. We
can easily extend this to the case of continuous $f_i$ by replacing
summation with integration and obtain a similar result.
\end{proof}
\subsubsection{DL and SC on a {\it statistical manifold}}
\label{sec111}
Now, we will formulate the DL and SC problems on a {\it statistical
  manifold}. The idea is to express each data point, $f_i$ as a sparse
weighted combination of the dictionary atoms, $\{g_j\}$. Given the
above hypothesis, our objective function is given by:
\begin{align}
\label{eq5}
\displaystyle \argmin_{\mathfrak{G}^*, W^*} \:\:\:\:E &  = \sum_{i=1}^N \text{KL}\left(f_i, \hat{f}_i\right) \\
\text{subject to} & \: \: \: w_{ij} \geq 0, \forall i,j \\
& \sum_j w_{ij} = 1, \forall i.
\end{align}
where, $\hat{f}_i = \sum_{j=1}^r w_{ij}g_j$, $\forall i$. In the above
objective function, $\hat{f}_i$ is the minimizer of the {\it weighted
  KL-center} of $\{g_j\}_{j=1}^r$ with weights $\{w_{ij}\}_{j=1}^r$.
The constraint $w_{ij} \geq 0$ and $\sum_j w_{ij} = 1$ is required to
make $\hat{f}_i$ a probability density. Note that, we can view
$\hat{f}_i$ as a reconstructed density from the dictionary elements
$\{g_j\}$ and weights $\{w_{ij}\}$. {\it We will now prove one of our key
  result} namely, that the minimization of the above objective
function with respect to $\{w_{ij}\}$ yields a sparse set of weights.
\begin{theorem}
\label{thm4}
Let $\mathfrak{G} = \{g_j\}$ and $W = [w_{ij}]$ be the solution of the
objective function $E$ in Equation \ref{eq5}. Then,
\begin{align*}
\left(\forall j \right), \text{KL}(f_i, g_j) \geq r_i , \text{ where } r_i = \sum_k w_{ik} \text{KL}(f_i, g_j)
\end{align*}
\end{theorem}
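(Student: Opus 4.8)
The plan is to exploit the fact that the objective $E$ in Equation \ref{eq5} decouples across the data points: since $\hat{f}_i=\sum_j w_{ij}g_j$ depends only on the $i$-th row of $W$, minimizing $E$ over $W$ (for a fixed dictionary $\mathfrak{G}$) amounts to solving, independently for each $i$, the subproblem $\min \text{KL}(f_i,\sum_j w_{ij}g_j)$ over the simplex $w_{ij}\ge 0$, $\sum_j w_{ij}=1$. Because $g\mapsto\text{KL}(f_i,g)$ is convex and $\hat{f}_i$ is linear in the weights, each subproblem is a convex program on the simplex, so first-order stationarity together with complementary slackness is both necessary and sufficient for the global optimum. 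Fixing $i$ throughout, I would write $d_j:=\text{KL}(f_i,g_j)$ and $r_i:=\sum_k w_{ik}d_k$, and aim to show $d_j\ge r_i$ for every $j$.

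First I would derive the optimality conditions for the $i$-th subproblem, mirroring the Lagrange-multiplier computation used in the proof of Theorem \ref{thm3}. Introducing a multiplier for the constraint $\sum_j w_{ij}=1$ and nonnegativity multipliers for $w_{ij}\ge 0$, differentiating $\text{KL}(f_i,\hat{f}_i)$ in $w_{ij}$ produces the stationarity relation governing which atoms are active at the optimum. I would then argue by contradiction: assuming some atom $g_{j_0}$ violated the claim, i.e. $d_{j_0}<r_i$, I would construct the competing weight vector obtained by moving the optimal $w_{i\cdot}$ toward the vertex $e_{j_0}$ of the simplex and show that the directional derivative of the objective along this feasible direction is strictly negative, contradicting optimality of $W$. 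This perturbation/exchange step is what forces every admissible atom to have divergence no smaller than the attained weighted average.

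The hard part, and the step I expect to be the main obstacle, is bridging the quantity that the optimality condition naturally produces with the quantities appearing in the statement. Differentiating the reconstruction error yields terms of the form $\int f_i g_{j_0}/\hat{f}_i$ and, more directly, the reconstruction cost $\text{KL}(f_i,\hat{f}_i)$, whereas the target inequality is phrased in terms of the individual divergences $d_{j_0}=\text{KL}(f_i,g_{j_0})$ and their weighted average $r_i=\sum_k w_{ik}\text{KL}(f_i,g_k)$. The link between these is Jensen's inequality: convexity of $g\mapsto\text{KL}(f_i,g)$ gives $\text{KL}(f_i,\hat{f}_i)\le\sum_k w_{ik}\text{KL}(f_i,g_k)=r_i$. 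This bound must be deployed with care, since it controls the reconstruction error by $r_i$ in only one direction; the delicate point is to combine it with the stationarity relation so that the single-atom divergence $d_{j_0}$ is pinned below $r_i$ at the optimum. Making this coupling tight, rather than merely recovering the weaker estimate $d_{j_0}\ge\text{KL}(f_i,\hat{f}_i)$, is the crux of the argument.
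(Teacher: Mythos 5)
Your plan diverges from the paper's proof in a way that matters: you apply KKT to the \emph{true} objective $w \mapsto \text{KL}(f_i, \sum_j w_{ij} g_j)$, whereas the paper never does. The paper first invokes Jensen's inequality (concavity of $\log$) to majorize $E$ by the surrogate $\sum_{i,j} w_{ij}\,\text{KL}(f_i, g_j)$, asserts that one may minimize this surrogate in place of $E$ (both vanish at their common minimum), and only then writes the Lagrangian and KKT conditions --- for a problem that is \emph{linear} in $W$. Linearity is the whole point: stationarity of a linear objective over the simplex immediately yields $\text{KL}(f_i,g_j) = r_i + \gamma_{ij}$ when $w_{ij}=0$ and $\text{KL}(f_i,g_j) = r_i$ when $w_{ij}>0$, with $\gamma_{ij}\geq 0$, which is exactly the claimed dichotomy. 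So the theorem, as the paper proves it, is really a statement about minimizers of the linearized surrogate, not of $E$ itself.

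The step you single out as the crux --- coupling stationarity of the true objective with Jensen so as to pin $d_{j_0}=\text{KL}(f_i,g_{j_0})$ below $r_i$ --- is not merely delicate; it cannot be carried out, because the claimed inequality is false at minimizers of the true objective. Stationarity of the true convex subproblem gives $\int f_i g_j/\hat f_i\,dx = 1$ for active atoms and $\leq 1$ for inactive ones, and combining this with Jensen yields only the weaker bound $d_j \geq \text{KL}(f_i,\hat f_i)$ that you anticipated. A two-point counterexample shows the stronger bound fails: take $f_i = (1/2,\,1/2)$, $g_1 = (1/2+\epsilon,\,1/2-\epsilon)$, $g_2 = (1/2-2\epsilon,\,1/2+2\epsilon)$; the weights $(2/3,\,1/3)$ give $\hat f_i = f_i$ exactly, hence the global optimum of the true subproblem, yet $\text{KL}(f_i,g_1) \approx 2\epsilon^2$ while $r_i \approx 4\epsilon^2$, so $d_1 < r_i$ and the two active atoms are not KL-equidistant from $f_i$. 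Your perturbation argument breaks at precisely this point: the directional derivative toward the vertex $e_{j_0}$ equals $1 - \int f_i g_{j_0}/\hat f_i\,dx$, which is related to $d_{j_0}$ in only one direction, so the assumption $d_{j_0} < r_i$ produces no sign contradiction. To recover the theorem you must, as the paper does, pass to the linear surrogate \emph{before} writing the KKT conditions (accepting the paper's own loose justification that minimizing the Jensen upper bound is an adequate proxy for minimizing $E$).
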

\begin{proof}
Consider the random variables $X_1, \cdots, X_N$ with the respective
densities $f_1, \cdots, f_N$. Since each dictionary element
$g_j$ is ``derived'' from $\{f_i\}$, hence, we can view each $g_j$ to
be associated with a random variable $Y_j$ such that $Y_j =
\tilde{g}_j\left(\{X_i\}\right)$, i.e., $Y_j$ is a transformation of
random variables $\{X_i\}$. We now have,
\begin{eqnarray*}
\begin{split}
E &= \sum_{i=1}^N \text{KL}\left(f_i, \sum_{j=1}^r w_{ij}g_j\right) \\
&= \sum_{i=1}^N \left[\int \left\{f_i(x) \log(f_i(x)) dx\right\} \right. - \\ 
& \left. \int \left\{f_i(x) \log\left( \sum_j w_{ij} g_j(x)\right)\right\} dx\right] 
\end{split}
\end{eqnarray*}
Using Jensen's inequality we have,
\begin{eqnarray*}
\begin{split}
E&\leq& \sum_{i=1}^N \left[\int \left\{f_i(x) \log(f_i(x)) dx\right\} - \right. \\
&& \left. \int \left\{f_i(x) \sum_j w_{ij} \log(g_j(x))\right\} dx\right] \\
&=& \sum_{i=1}^N E_{X_i}\left[ \log(f_i) - \sum_j w_{ij} \log(g_j)\right]
\end{split}
\end{eqnarray*}
where, $E_{X}[h(X)]$ is the expectation of $h(X)$, where $h(X)$ is a
transformation of the random variable $X$.  So,
\begin{eqnarray*} 
E&\leq& \sum_{i=1}^N E_{X_i} [\log(f_i)] - \sum_{i=1}^N \sum_j w_{ij} E_{X_i}[\log(g_j)] \\
&=& \sum_{i=1}^N \sum_{j=1}^r w_{ij} E_{X_i} [\log(f_i)] - \sum_{i=1}^N \sum_{j=1}^r w_{ij} E_{X_i}[\log(g_j)] \\
&=& \sum_{i=1}^N \sum_j w_{ij} E_{X_i} [\log(f_i) - \log(g_j)] \\
&=& \sum_{i=1}^N \sum_j w_{ij} \text{KL}(f_i, g_j)
\end{eqnarray*}
So, $E \leq \sum_{i=1}^N \sum_j w_{ij} \text{KL}(f_i, g_j)$. Since,
both $E$ and $\sum_{i=1}^N \sum_j w_{ij} \text{KL}(f_i, g_j)$ attain
minima at the same value (equal to $0$), we can minimize $
\sum_{i=1}^N \sum_j w_{ij} \text{KL}(f_i, g_j)$ instead of $E$. Using
a Lagrange multiplier $r_i$ for each constraint $\sum_j w_{ij} = 1$, and
$\gamma_{ij}$ for each constraint $w_{ij} \geq 0$, we get the following
function
$$
\sum_j w_{ij} \text{KL}(f_i, g_j) + \sum_{i=1}^N r_i (1 - \sum_j w_{ij}) - \sum_{i, j} \gamma_{ij} w_{ij}
$$
We minimize the above objective function and add the KKT conditions
$$
\gamma_{ij} w_{ij} = 0
$$
to get, 
\begin{align}
\label{eq6}
\text{KL}(f_i, g_j) = \left\{\begin{array}{lr}
        r_i + \gamma_{ij}, & \text{if } w_{ij} = 0\\
        r_i, & \text{if } w_{ij} > 0
        \end{array}\right.
\end{align}
As each $\gamma_{ij} \geq 0$, this concludes the proof. 
\end{proof}
A straightforward Corollary of the above theorem is as follows:
\begin{corollary}
\label{cor1}
The objective function $E$ is bounded above by $\sum_{i=1}^N r_i$, i.e., $E \leq \sum_{i=1}^N r_i$.
\end{corollary}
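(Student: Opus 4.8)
The plan is to leverage the two facts already established inside the proof of Theorem~\ref{thm4}: the Jensen-inequality bound $E \leq \sum_{i=1}^N \sum_j w_{ij} \text{KL}(f_i, g_j)$, together with the KKT characterization recorded in Equation~\ref{eq6}. The key observation is that the right-hand side of this bound collapses to exactly $\sum_{i=1}^N r_i$ once the optimality conditions are substituted in, so the corollary is a direct consequence of Theorem~\ref{thm4} rather than requiring a fresh argument.

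First I would fix an index $i$ and examine the inner sum $\sum_j w_{ij} \text{KL}(f_i, g_j)$, splitting it into the terms with $w_{ij} = 0$ and those with $w_{ij} > 0$. The former contribute nothing, since each summand carries a factor of $w_{ij}$. For the latter, Equation~\ref{eq6} gives $\text{KL}(f_i, g_j) = r_i$, so each surviving term equals $w_{ij} r_i$. Thus only the atoms in the support of $\{w_{ij}\}_j$ enter the sum, and on that support the divergence is constant and equal to $r_i$.

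Next I would factor the constant $r_i$ out of the surviving terms and invoke the simplex constraint $\sum_j w_{ij} = 1$; because the vanishing terms carry zero weight, summing the weights over the support is the same as summing over all $j$. This yields $\sum_j w_{ij} \text{KL}(f_i, g_j) = r_i$ for every $i$, which simultaneously confirms the identity $r_i = \sum_k w_{ik} \text{KL}(f_i, g_k)$ stated in Theorem~\ref{thm4}. Summing this equality over $i$ and chaining it with the Jensen bound gives $E \leq \sum_{i=1}^N \sum_j w_{ij} \text{KL}(f_i, g_j) = \sum_{i=1}^N r_i$, which is the claim.

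I do not anticipate any genuine obstacle here, as the result is purely a matter of substituting the KKT-derived optimality conditions into the upper bound already proved for Theorem~\ref{thm4}. The only point requiring a moment's care is verifying that the $w_{ij} = 0$ terms drop out \emph{before} the normalization constraint is applied, so that the constant $r_i$ multiplies a weight vector whose entries sum to one.
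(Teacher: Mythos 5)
Your proposal is correct and follows essentially the same route as the paper's own proof: both chain the Jensen bound $E \leq \sum_{i=1}^N \sum_j w_{ij}\,\text{KL}(f_i, g_j)$ from Theorem~\ref{thm4} with the KKT characterization in Equation~\ref{eq6} to obtain $\sum_j w_{ij}\,\text{KL}(f_i, g_j) = r_i$ for each $i$. Your write-up merely spells out the intermediate step (splitting off the $w_{ij}=0$ terms and invoking $\sum_j w_{ij}=1$) that the paper leaves implicit.
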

\begin{proof}
From Theorem \ref{thm4}, we know that, $E \leq \sum_{i=1}^N \sum_j
w_{ij} \text{KL}(f_i, g_j)$. From Equation \ref{eq6}, we can now get
$\sum_j w_{ij} \text{KL}(f_i, g_j) = r_i$, $\forall i$. Thus the
Corollary holds.
\end{proof}
\begin{figure}[!ht]
 \centering
    \includegraphics[scale=0.55]{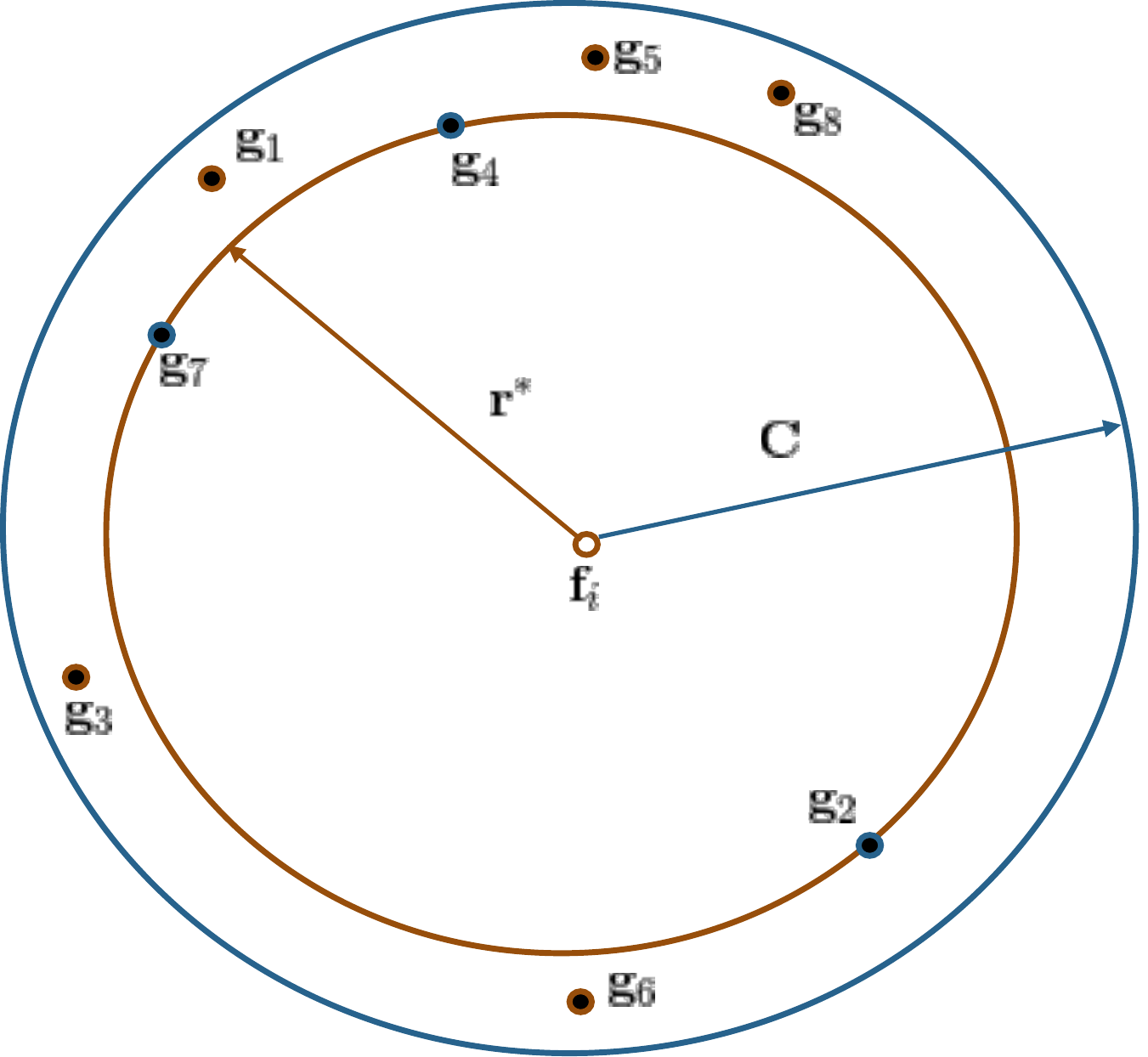}
\caption{Illustrative figure for Theorem \ref{thm5}. Blue and brown
  circles are atoms with non-zero and zero weights
  respectively.}\label{fig0.5}
\end{figure}
We can see that the dictionary elements, $g_j$, for which the
associated weights are positive, are exactly at the same distance
$r_i$ from the density $f_i$. Corollary \ref{cor1} implies that
solving the objective function in Equation \ref{eq5} yields a ``tight
cluster'' structure around each $f_i$, as minimizing $E$ is equivalent
to minimizing each $r_i$.
\begin{corollary}
\label{cor2}
Let, $f_i$ be well approximated by a single dictionary element
$g_{l}$. Further assume that $g_l$ is a convex combination of a set
of dictionary atoms, i.e., $g_l = \sum_{k=1}^{r_1} w_{ij_k}
g_{j_k}$. Without loss of generality (WLOG), assume that $w_{ij_k} > 0$,
$\forall k$. Let, $r_i = \text{KL}(f_i, g_l)$ and $\hat{r}_i =
\text{KL}(f_i, g_{j_k})$, $\forall k=1, \cdots, r_1$. Then, $r_i <
\hat{r}_i$.
\end{corollary}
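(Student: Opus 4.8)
The plan is to recognize Corollary \ref{cor2} as a \emph{strict}-convexity statement about the KL-divergence in its second argument, and to obtain it by sharpening the Jensen step already deployed in the proof of Theorem \ref{thm4}. Writing out the divergence to the mixture atom,
$$
\text{KL}(f_i, g_l) = \int f_i(x) \log f_i(x)\, dx - \int f_i(x) \log\Big(\sum_{k=1}^{r_1} w_{ij_k} g_{j_k}(x)\Big)\, dx,
$$
I would apply Jensen's inequality to the concave function $\log$ (equivalently, convexity of $-\log$) inside the second integral, exactly as in the proof of Theorem \ref{thm4}, to obtain the pointwise bound $\log\big(\sum_k w_{ij_k} g_{j_k}(x)\big) \geq \sum_k w_{ij_k} \log g_{j_k}(x)$.

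Integrating this bound against $f_i$ and using $\sum_k w_{ij_k} = 1$ to distribute the entropy term $\int f_i \log f_i$ across the $r_1$ summands, I would arrive at
$$
\text{KL}(f_i, g_l) \leq \sum_{k=1}^{r_1} w_{ij_k}\, \text{KL}(f_i, g_{j_k}).
$$
Since by hypothesis $\text{KL}(f_i, g_{j_k}) = \hat{r}_i$ for every $k$, the right-hand side collapses to $\hat{r}_i \sum_k w_{ij_k} = \hat{r}_i$, yielding $r_i \leq \hat{r}_i$. This establishes the weak inequality directly from machinery already in place.

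The main obstacle is upgrading this to the strict inequality $r_i < \hat{r}_i$, and here I would invoke the \emph{strict} convexity of $-\log$: Jensen's inequality holds with equality at a point $x$ only when the values $\{g_{j_k}(x)\}_k$ carrying positive weight are all equal, and by the WLOG assumption every $w_{ij_k} > 0$. Because the $g_{j_k}$ are distinct dictionary atoms, they cannot coincide $\mu$-almost everywhere, so the pointwise bound is strict on a set of positive $\mu$-measure; provided $f_i$ assigns positive mass to this set (which holds when $f_i$ has full support, the standing assumption on the densities here), integration preserves the strictness and I obtain $\text{KL}(f_i, g_l) < \sum_k w_{ij_k}\text{KL}(f_i, g_{j_k}) = \hat{r}_i$, i.e.\ $r_i < \hat{r}_i$. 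The only real care needed is to make explicit the genuine-distinctness/support condition under which Jensen is strict, since without it (duplicate atoms, or $f_i$ vanishing off the set where the atoms disagree) one can only guarantee the weak inequality.
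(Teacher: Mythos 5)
Your proposal follows essentially the same route as the paper's proof: both expand $r_i = \text{KL}(f_i, g_l)$ into its entropy and cross-entropy terms, apply Jensen's inequality to $\log\bigl(\sum_k w_{ij_k} g_{j_k}(x)\bigr)$, and collapse the resulting bound $\sum_{k} w_{ij_k}\,\text{KL}(f_i, g_{j_k})$ to $\hat{r}_i$ via $\sum_k w_{ij_k}=1$. If anything you are more careful than the paper, which asserts the strict inequality $<$ at the Jensen step without comment, whereas you correctly flag that strictness requires the strict convexity of $-\log$ together with the atoms being genuinely distinct on a set of positive measure that $f_i$ charges --- a hypothesis the paper leaves implicit.
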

\begin{proof}
Using the hypothesis in Theorem \ref{thm4}, we have, 
\begin{eqnarray*}
r_i &=& \int f_i(x)\log(f_i(x)) dx - \int f_i(x) \log(g_l(x)) dx  \\
& < & \left[ \int f_i(x) \log(f_i(x)) dx - \int f_i(x) \sum_{k=1}^{r_1} w_{ij_k} \log(g_{j_k}) dx\right] \\
&=& \sum_{k=1}^{r_1} w_{ij_k} \text{KL}(f_i, g_{j_k}) \\
&=& \hat{r}_i
\end{eqnarray*}
Hence, $r_i < \hat{r}_i$. Using Corollary \ref{cor1}, we can see that,
in order to represent $f_i$, the objective function is to minimize
$r_i$. Thus, using Corollary \ref{cor1}, we can say that a sparse set
of weights, i.e., corresponding to $g_l$, is preferable over a set of
non-zero weights, i.e., corresponding to a set of $\{g_{j_k}\}$. 
\end{proof}

Now, we will state and prove the second key result namely, a theorem
which states that our proposed algorithm yields non-zero number of
atoms whose corresponding weights are zero i.e., $k-sparsity$ for some
$k > 0$.
\begin{theorem}
\label{thm5}
Let $\mathfrak{S}_i = \left\{w_{ij} | w_{ij}=0\right\}$, then, with
probability $1$, the cardinality of $\mathfrak{S}_i$, i.e.,
$|\mathfrak{S}_i| >0$, for all $i$.
\end{theorem}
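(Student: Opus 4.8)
The plan is to argue by contradiction, reducing the claim ``some weight vanishes'' to the impossibility of all atoms being exactly equidistant (in KL) from $f_i$, and then to show that such exact equidistance is a measure-zero configuration. First I would suppose that $|\mathfrak{S}_i| = 0$ for some data index $i$, i.e.\ that every weight $w_{ij}$, $j = 1, \ldots, r$, is strictly positive. Invoking Theorem \ref{thm4}, specifically the optimality characterization in Equation \ref{eq6}, each strictly positive weight forces $\text{KL}(f_i, g_j) = r_i$. Hence all $r$ atoms would have to satisfy
\begin{equation*}
\text{KL}(f_i, g_1) = \text{KL}(f_i, g_2) = \cdots = \text{KL}(f_i, g_r) = r_i,
\end{equation*}
so that they are all exactly KL-equidistant from $f_i$.

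Next I would show that this equidistance event is non-generic. Parametrizing each atom by $\bm{\theta}_j \in \Theta \subset \mathbf{R}^n$, the map $\bm{\theta}_j \mapsto \text{KL}(f_i, g_j)$ is a smooth, \emph{non-constant} function, non-constant precisely because the KL-divergence is strictly positive away from $f_i$ and vanishes only at $f_i$. It therefore suffices to observe that already the single equality $\text{KL}(f_i, g_1) = \text{KL}(f_i, g_2)$ is restrictive: the function $\Psi(\bm{\theta}_1, \bm{\theta}_2) = \text{KL}(f_i, g_1) - \text{KL}(f_i, g_2)$ on $\Theta \times \Theta$ has nonvanishing gradient on a dense open set (since the second atom's coordinates enter only through the non-constant $\text{KL}(f_i, g_2)$), so its zero set is a smooth hypersurface of codimension $\ge 1$ and hence of Lebesgue measure zero in $\Theta \times \Theta$. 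A fortiori the full equidistance locus, an analytic subvariety of $\Theta^r$ cut out by the $r-1$ successive equalities, has Lebesgue measure zero.

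Finally, assuming the dictionary atoms are drawn from a distribution that is absolutely continuous with respect to Lebesgue measure on $\Theta^r$ (equivalently, that the atoms are in general position), this measure-zero equidistance set is realized with probability $0$. Consequently, with probability $1$ the values $\{\text{KL}(f_i, g_j)\}_{j=1}^r$ are not all equal, so by the contrapositive of Equation \ref{eq6} at least one weight $w_{ij}$ must vanish, giving $|\mathfrak{S}_i| > 0$ for every $i$.

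The hard part will be making the ``with probability $1$'' clause precise: one must commit to a probability model on the atoms (or on the data generating them) and verify that the coincidence locus is genuinely measure zero. The key technical input is exactly that $\text{KL}(f_i, \cdot)$ varies non-trivially over the parameter space, which is what makes even a single equality impose a real codimension-one constraint. A secondary subtlety I would address is non-uniqueness of the optimal weights: should the minimizing weight vector fail to be unique, I would select an extreme point of the optimal face of the simplex $\Delta$, noting that the equidistance characterization inherited from Theorem \ref{thm4} still forces the same measure-zero conclusion.
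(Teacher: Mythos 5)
Your proposal is correct under its stated assumptions, but it takes a genuinely different route from the paper's. Both arguments start from the dichotomy in Theorem \ref{thm4} (Eq.~\ref{eq6}): $w_{ij}>0$ forces $\text{KL}(f_i,g_j)=r_i$ while $w_{ij}=0$ allows $\text{KL}(f_i,g_j)\geq r_i$. You then take the contrapositive--genericity route: all-positive weights would make all $r$ atoms exactly KL-equidistant from $f_i$, and under an absolutely continuous law on the atom parameters the equidistance locus in $\Theta^r$ is Lebesgue-null, so almost surely some weight vanishes. The paper instead runs a geometric counting argument: it identifies $\mathfrak{S}_i$ with the atoms lying in the annulus $\mathcal{B}(f_i,C)\setminus\mathcal{B}(f_i,r^*)$, assumes the dictionary is $\epsilon$-separated and the underlying measure $\mu^*$ is Ahlfors-regular, models the annulus count as Poisson, lower-bounds its expectation by $\left(\kappa_1(C-r^*)\right)^n$, and shows $r^*<C$ almost surely because the boundary of the \emph{data ball} is $\mu^*$-null. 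Your route needs fewer geometric hypotheses (no separation, no volume regularity) and yields a \emph{stronger} generic conclusion: almost surely the values $\text{KL}(f_i,g_j)$ are pairwise distinct, so for fixed atoms the optimal code is supported on the single nearest atom, i.e.\ $|\mathfrak{S}_i|=r-1$. The paper's route, in exchange, gives a quantitative lower bound on \emph{how many} atoms receive zero weight, tying the degree of sparsity to the gap between $r^*$ and $C$, consistent with the tight-cluster picture after Corollary \ref{cor1}. Two caveats on your side, both repairable: first, ``non-constant smooth'' does not make $\{\Psi=0\}$ null---a smooth function can vanish on a set of positive measure, and a nonvanishing gradient on a dense open set controls only the portion of the zero set inside that set---so you must assume real-analyticity of $\bm{\theta}\mapsto\text{KL}(f_i,g_{\bm{\theta}})$ (true for the exponential families and Gaussians relevant here), as your later ``analytic subvariety'' phrasing tacitly does; second, your probability model sits on the \emph{learned} atoms, which are outputs of the optimization rather than independent draws---a substantive assumption, though the paper's own Poisson model for the atom count (and its identification of the realized cardinality $|\mathfrak{S}_i|$ with an expectation) is of exactly the same heuristic character, so you are not below the paper's level of rigor.
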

\begin{proof}
Let $\mu^*$ be the probability measure on $\mathfrak{D}$. let
$\mathcal{B}(f, r)$ denote a closed ball of radius $r$ centered at $f
\in \mathfrak{D}$, we assume that the measure is bounded, i.e.,
$\exists$ constants, $\kappa_1>0$ and $\kappa_2>0$ such that, $
\left(\kappa_1 r\right)^n \leq \mu^*\left(\mathcal{B}(f, r)\right)
\leq \left(\kappa_2 r\right)^n $, for all $0<r\leq 1$. Let us assume,
$\mathfrak{G}$ is an $\epsilon$-separated set for some $0<\epsilon
\leq 1$. Furthermore, assume that $\mathfrak{F}$ has finite variance,
i.e., $\exists C$ such that $\forall i, j$, $\text{KL}\left(f_i,
f_j\right) \leq C$, we will call this radius $C$ (closed) ball as the
\emph{data ball}. Let the optimum value of $E/N$ be $r^*$, i.e., $ r^*
= \max_i \text{KL}\left(f_i, \hat{f}_i\right)$ (see Figure
\ref{fig0.5}). Now, for a given $i$, consider $\mathcal{B}\left(f_i,
r^*\right)$, from Theorem \ref{thm4}, we know that if for some $j$,
$w_{ij} >0$, then, $\text{KL}(f_i, g_j) = r^*$, else, $\text{KL}(f_i,
g_j) > r^*$.

Thus, $\mathfrak{S}_i$ can be rewritten as, $\mathfrak{S}_i = \left\{
g_j | \text{KL}(f_i, g_j) > r^*\right\}$. Let, $N\left(r^*, C\right)$
be the number of $g_j$s in $\mathcal{B}(f_i, C) \setminus
\mathcal{B}(f_i, r^*)$. Then, $N\left(r^*, C\right)$ follows a Poisson
distribution with rate $\lambda = \mu^*\left(\mathcal{B}(f,
\epsilon/2)\right)$. Hence, it is easy to see that $|\mathfrak{S}_i| =
E\left[N\left(r^*, C\right)\right]$. Now,
\begin{align*}
E\left[N\left(r^*, C\right)\right] &=  \mu^*\left(\mathcal{B}(f, \epsilon/2)\right) \left(\frac{2(C-r^*)}{\epsilon}\right)^n \\
&\geq \left(\frac{\kappa_1 \epsilon}{2}\right)^n \left(\frac{2(C-r^*)}{\epsilon}\right)^n \\
&= \left(\kappa_1 (C-r^*)\right)^n
\end{align*}
Since we are reconstructing $f_i$ as a convex combination of $g_j$s,
the only case when $r^* = C$ occurs when all $f_i$s lie on the
boundary of the \emph{data ball}. Let $\mathcal{T} = \left\{f_i| f_i
\in \mathfrak{F}, f_i \text{is on the boundary of \emph{data
    ball}}\right\}$, clearly, $\mu^*\left(\mathcal{T}\right) =
0$. Hence, with probability $1$, $C-r^*>0$. Now, as $\kappa_1>0$, we
can say with probability $1$, $|\mathfrak{S}_i| = E\left[N\left(r^*,
  C\right)\right] >0$. Since $i$ is arbitrary, the claim holds. This
comletes the proof.
\end{proof}

\emph{Theorem \ref{thm5} states that our proposed algorithm yields
  $k$-sparse atoms, for some $k>0$}.
\vspace*{12pts}

{\bf Comment on using Hellinger distance:} On the space of densities,
one can define Hellinger distance (denoted by $d_{L2}$) as follows:
Given $f, g \in \mathfrak{D}$, one can use square root parametrization
to map these densities on the unit Hilbert sphere, let the points be
denoted by $\bar{f}$, $\bar{g}$. Then, one can define the $\ell_2$
distance between $\bar{f}$ and $\bar{g}$ (the Hellinger distance
between $f$ and $g$) as $d^2_{L2}\left(f,g\right) = \frac{1}{2}\:\int
\left(\bar{f}-\bar{g}\right)^2$. One can easily see that the above
expression is equal to $d^2_{L2}\left(f,g\right) = \left( 1 - \int
\left(\bar{f}\bar{g}\right) \right)$. This metric is the chordal
metric on the hypersphere, and hence an {\it extrinsic} metric. We
now replace $\text{KL}$ divergence by the Hellinger distance in our
objective function in Eq. \ref{eq5}. The modified objective function is
given in Eq. \ref{eq115}.

\begin{align}
\label{eq115}
\displaystyle \argmin_{\mathfrak{G}^*, W^*} \:\:\:\:E &  = \sum_{i=1}^N d^2_{L2}\left(f_i, \hat{f}_i\right) \\
\text{subject to} & \: \: \: w_{ij} \geq 0, \forall i,j \\
& \sum_j w_{ij} = 1, \forall i.
\end{align}

One can easily show that the above analysis of sparsity also holds
when we replace the $\text{KL}$ divergence by the Hellinger distance
(as done in Eq. \ref{eq115}). The following Theorem (without proof)
states this result.

\begin{theorem}
\label{thm114}
Let $\mathfrak{G} = \{g_j\}$ and $W = [w_{ij}]$ be the solution of the
objective function $E$ in Equation \ref{eq115}. Then,
\begin{align*}
\left(\forall j\right), d^2_{L2}(f_i, g_j) \geq r_i , \text{ where } r_i = \sum_k w_{ik} d^2_{L2}(f_i, g_j)
\end{align*}
Then, analogous to Corollary \ref{cor2}, it can be easily shown that the
set of weights is sparse.
\end{theorem}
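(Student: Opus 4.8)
The plan is to mirror the proof of Theorem \ref{thm4} verbatim, substituting the concavity of the square-root map for the concavity of the logarithm. First I would rewrite the Hellinger objective using the identity stated in the paper, $d^2_{L2}(f,g) = 1 - \int \sqrt{f}\sqrt{g}$, so that each summand of $E$ becomes $d^2_{L2}(f_i, \hat{f}_i) = 1 - \int \sqrt{f_i}\,\sqrt{\sum_j w_{ij} g_j}$, where $\hat{f}_i = \sum_j w_{ij} g_j$ as in Eq. \ref{eq115}.

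The key step is to bound $E$ above by a quantity that is linear in the weights $w_{ij}$. Since $\{w_{ij}\}_j$ is a convex weight vector ($w_{ij}\geq 0$, $\sum_j w_{ij}=1$) and $t\mapsto\sqrt{t}$ is concave, Jensen's inequality yields the pointwise bound $\sqrt{\sum_j w_{ij} g_j(x)} \geq \sum_j w_{ij}\sqrt{g_j(x)}$. Multiplying by $\sqrt{f_i}\geq 0$ and integrating gives $\int \sqrt{f_i}\,\sqrt{\sum_j w_{ij} g_j} \geq \sum_j w_{ij}\int\sqrt{f_i}\sqrt{g_j}$. Using $\sum_j w_{ij}=1$ to rewrite the leading constant $1$ as $\sum_j w_{ij}$, I obtain $d^2_{L2}(f_i,\hat{f}_i) \leq \sum_j w_{ij}\bigl(1 - \int\sqrt{f_i}\sqrt{g_j}\bigr) = \sum_j w_{ij}\, d^2_{L2}(f_i, g_j)$, and hence $E \leq \sum_{i=1}^N\sum_j w_{ij}\, d^2_{L2}(f_i, g_j)$. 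This is exactly the analogue of the Jensen bound used in Theorem \ref{thm4}. Next, as in that proof, I would note that both $E$ and its surrogate upper bound are nonnegative and vanish simultaneously (when each $f_i$ is reconstructed exactly), so minimizing the surrogate is equivalent to minimizing $E$. I would then form the Lagrangian with multipliers $r_i$ for the equality constraints $\sum_j w_{ij}=1$ and $\gamma_{ij}\geq 0$ for the inequalities $w_{ij}\geq 0$; stationarity in $w_{ij}$ gives $d^2_{L2}(f_i,g_j) = r_i + \gamma_{ij}$, and complementary slackness $\gamma_{ij} w_{ij}=0$ yields $d^2_{L2}(f_i,g_j)=r_i$ when $w_{ij}>0$ and $d^2_{L2}(f_i,g_j)=r_i+\gamma_{ij}\geq r_i$ when $w_{ij}=0$. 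Since each $\gamma_{ij}\geq 0$, the claimed inequality $d^2_{L2}(f_i,g_j)\geq r_i$ for all $j$ follows.

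The main obstacle, just as in the KL case, is the informal "both attain their minimum, equal to $0$, at the same point" step that licenses replacing $E$ by its linear surrogate; this is the only genuinely delicate point and would be spelled out using the positive-definiteness of $d^2_{L2}$. In contrast, the Jensen bound here is actually cleaner than in Theorem \ref{thm4}, since the square-root map acts directly on densities and no expectation-over-transformed-random-variable bookkeeping is needed. Once the surrogate bound and the KKT analysis are in place, the subsequent sparsity conclusion is obtained exactly as in Corollary \ref{cor2} by replacing $\text{KL}$ with $d^2_{L2}$ throughout.
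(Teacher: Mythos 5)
Your proposal is correct and coincides with the argument the paper intends: the paper states Theorem \ref{thm114} \emph{without proof}, asserting only that the analysis of Theorem \ref{thm4} carries over, and your substitution of the concavity of $t \mapsto \sqrt{t}$ for that of $\log$ --- giving the pointwise Jensen bound and hence $d^2_{L2}(f_i,\hat{f}_i) \leq \sum_j w_{ij}\, d^2_{L2}(f_i,g_j)$ via the identity $d^2_{L2}(f,g) = 1 - \int \sqrt{f}\sqrt{g}$, followed by the identical Lagrangian/KKT analysis --- is exactly that transfer, carried out correctly. You also rightly flag the one genuinely informal step (replacing $E$ by its linear surrogate on the grounds that both vanish together), which is inherited verbatim from the paper's own proof of Theorem \ref{thm4} and is no worse here.
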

\subsubsection{DL and SC on the {\it manifold of SPD matrices}}
\label{sec112}

Let, the manifold of $n\times n$ SPD matrices be denoted by
$\mathcal{P}_n$. We will use the following notations for the rest of
the paper. On $\mathcal{P}_n$,
\begin{itemize}
 \item $\mathcal{C}$ be a dictionary with $r$ atoms $C_1$, $\cdots,$
   $C_r$, where each $C_i \in \mathcal{P}_n$.
 \item $\mathcal{X} = \{X_i\}_{i=1}^N$ $\subset \mathcal{P}_n$ be a
   set of data points.
\item $w_{ij}$ be nonnegative weights corresponding to the $i^{th}$
  data point and the $j^{th}$ atom, $i \in \{1,\cdots, N\}$ and $j \in
  \{1, \cdots, r\}$.
\end{itemize}
We now extend the DL and SC formulation to $\mathcal{P}_n$. Note that,
a point, $C \in \mathcal{P}_n$ can be identified with a Gaussian
density with zero mean and covariance matrix $C$. Hence, it is natural
to extend our information theoretic DL \& SC framework from a
statistical manifold to $\mathcal{P}_n$. Recall that the symmetrized
$\text{KL}$ divergence between two densities $f$ and $g$ can be
defined by the JSD in Equation \ref{eq11}. Using the square root of
the JSD, one can define a ``distance'' between two SPD matrices on
$\mathcal{P}_n$ (the quotes on distance are used because JSD does not
satisfy the traingle inequality for it to be a distance
measure). Similar to Equation \ref{eq4}, we can analogously define the
{\it symmetrized weighted KL center}, denoted by $M_{\text{KL}}$, as
the minimizer of the sum of symmetrized squared KL divergences. Given,
$\mathcal{X} = \{X_i\}_{i=1}^N$, we can define the {\it symmetrized
  KL-center} of $\mathcal{X}$ as follows \cite{wang2005dti}
$$
M_{\text{KL}}(\mathcal{X}) = \sqrt{B^{-1}}\sqrt{\sqrt{B}A\sqrt{B}}\sqrt{B^{-1}}
$$ where $A = \frac{1}{N} \sum_i X_i$, $B = \frac{1}{N} \sum_i
X_i^{-1}$. We can extend the above result to define the {\it
  symmetrized weighted KL-center} via the following Lemma.
\begin{lemma}
On $\mathcal{X} = \{X_i\}_{i=1}^N$ with weights $\{w_i\}_{i=1}^N$, the
{\it symmetrized weighted KL-center}, $M_{\text{KL}}(\mathcal{X},
\{w_i\})$ is defined as
$$
M_{\text{KL}}(\mathcal{X}, \{w_i\}) =  \sqrt{B^{-1}}\sqrt{\sqrt{B}A\sqrt{B}}\sqrt{B^{-1}}
$$
where $A = \frac{1}{\sum_j w_j} \sum_i w_i X_i$, $B = \frac{1}{\sum_j w_j} \sum_i w_i X_i^{-1}$
\end{lemma}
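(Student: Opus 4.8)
The plan is to write the weighted symmetrized KL objective explicitly as a function of the candidate center $M$ and minimize it by solving the resulting matrix equation in closed form. First I would use the identification of $X\in\mathcal{P}_n$ with the zero-mean Gaussian of covariance $X$ and compute the symmetrized KL divergence (the JSD of Eq.~\ref{eq11}) between $N(0,X_i)$ and $N(0,M)$. Writing out $\text{KL}(X_i,M)$ and $\text{KL}(M,X_i)$, the log-determinant contributions carry opposite signs and cancel upon symmetrization, leaving a pure trace expression
\begin{equation*}
\text{JSD}(X_i,M) = \tfrac14\left[\mathrm{tr}(M^{-1}X_i)+\mathrm{tr}(X_i^{-1}M)-2n\right].
\end{equation*}
Since the squared ``distance'' appearing in the analogue of Eq.~\ref{eq4} is exactly this JSD, $M_{\text{KL}}(\mathcal{X},\{w_i\})$ is the minimizer over $M\in\mathcal{P}_n$ of $\sum_i w_i\,\text{JSD}(X_i,M)$, and positive constant factors do not affect the $\argmin$.

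Next I would collapse the data dependence into the two weighted averages $A$ and $B$ of the statement by linearity of the trace: $\sum_i w_i\,\mathrm{tr}(X_i^{-1}M)=(\sum_j w_j)\,\mathrm{tr}(BM)$ and $\sum_i w_i\,\mathrm{tr}(M^{-1}X_i)=(\sum_j w_j)\,\mathrm{tr}(M^{-1}A)$. Dropping the additive constant and the positive factor $\sum_j w_j$, the objective reduces to $g(M)=\mathrm{tr}(M^{-1}A)+\mathrm{tr}(BM)$. Differentiating with respect to the symmetric matrix $M$, using $\frac{\partial}{\partial M}\mathrm{tr}(BM)=B$ and $\frac{\partial}{\partial M}\mathrm{tr}(M^{-1}A)=-M^{-1}AM^{-1}$, and setting the gradient to zero gives the algebraic Riccati-type equation $MBM=A$.

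The main obstacle is solving $MBM=A$ for an SPD matrix and certifying that the stated closed form is its unique SPD solution. I would handle this by a congruence with $B^{1/2}$: pre- and post-multiplying $MBM=A$ by $B^{1/2}$ yields $\bigl(B^{1/2}MB^{1/2}\bigr)^2=B^{1/2}AB^{1/2}$. As $B^{1/2}AB^{1/2}$ is SPD it admits a unique SPD square root, so $B^{1/2}MB^{1/2}=\bigl(B^{1/2}AB^{1/2}\bigr)^{1/2}$ and hence
\begin{equation*}
M=B^{-1/2}\bigl(B^{1/2}AB^{1/2}\bigr)^{1/2}B^{-1/2}=\sqrt{B^{-1}}\sqrt{\sqrt{B}A\sqrt{B}}\sqrt{B^{-1}},
\end{equation*}
which is manifestly SPD (a congruence of an SPD matrix by the invertible $B^{-1/2}$) and matches the claimed expression.

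Finally I would confirm this stationary point is the global minimizer rather than a saddle. Because $\mathrm{tr}(BM)$ is linear and $\mathrm{tr}(M^{-1}A)=\mathrm{tr}(A^{1/2}M^{-1}A^{1/2})$ is strictly convex on the SPD cone (the matrix inverse is operator convex and $A\succ0$), $g$ is strictly convex; together with its coercivity as $M$ approaches the boundary or grows unboundedly, this forces a unique interior minimizer, necessarily the critical point found above. This establishes the formula, and taking all $w_i$ equal recovers the unweighted symmetrized KL-center stated just before the Lemma as the special case.
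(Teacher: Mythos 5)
Your proof is correct, and it supplies what the paper itself omits: the Lemma is stated without proof, with only the unweighted special case credited to the cited DTI work of Wang and Vemuri. Your route is the canonical one for that result, generalized to weights --- reduce the weighted symmetrized KL objective between zero-mean Gaussians to $g(M)=\mathrm{tr}(M^{-1}A)+\mathrm{tr}(BM)$ after the log-determinant terms cancel, derive the Riccati equation $MBM=A$ from the first-order condition, and solve it by congruence with $B^{1/2}$ using uniqueness of the SPD square root --- so it is essentially the same argument the paper implicitly relies on, not a different one. You go beyond the paper in two small but worthwhile ways: your strict-convexity (operator convexity of the inverse on the SPD cone) and coercivity argument certifies that the critical point is the unique global minimizer, a point the paper never addresses even for the unweighted case; and your trace form $\frac14\left[\mathrm{tr}(M^{-1}X_i)+\mathrm{tr}(X_i^{-1}M)-2n\right]$ silently corrects the paper's expression for $\text{J}(X,\hat{X})$, which is written as a matrix rather than a scalar (the traces are missing there, evidently a typo). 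One cosmetic caveat: when differentiating over symmetric matrices one should formally restrict the gradient to symmetric perturbations, but since both $B$ and $M^{-1}AM^{-1}$ are already symmetric, the stationarity condition you obtain is unaffected, so this does not create a gap.
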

Analogous to Equation \ref{eq5}, we can define our formulation
for DL and SC on $\mathcal{P}_n$ as follows:
\begin{align}
\label{eq15}
\displaystyle \argmin_{\mathcal{C}^*, W^*} \:\:\:\:E &  = \sum_{i=1}^N \text{J}(X_i, \hat{X}_i) \\
\text{where} & \: \: \: \hat{X}_i = M_{\text{KL}}(\mathcal{C}, \{w_{ij}\}_{j=1}^r) \\
\text{subject to} & \: \: \: w_{ij} \geq 0, \forall i,j \\
& \sum_j w_{ij} = 1, \forall i.
\end{align}
Here $\text{J}(X, \hat{X})$ is the {\it symmetrized-KL} also known as the
J-divergence and is defined as:
$$
\text{J}(X,\hat{X}) = \frac{1}{4} \left[X^{-1}\hat{X} + \hat{X}^{-1}X - 2n\right]
$$


\begin{algorithm}
    \KwIn{$\mathcal{X} = \{X_i\}_{i=1}^N$ $\subset \mathcal{P}_n$, $\eta > 0$, $\epsilon>0$}
    \KwOut{$\mathcal{C} = \{C_j\}_{j=1}^r$ $\subset \mathcal{P}_n$, $W = [w_{ij}] \geq 0$}
    
    Initialize $\mathcal{C}$ by using the k-means algorithm on$ \mathcal{P}_n$\;
    Initialize $W$ randomly using random non-negative numbers from $[0,1]$\;
    \For{$i = 1, \cdots, N$}{
     Normalize the vector $w(i,.)$ so that it sums to $1$ \;
    }
    
    $flag \leftarrow 1$\; Compute the objective function, $E$ using
    Equation \ref{eq15}\; $E^\text{old} \leftarrow E$\; $\text{iter}
    \leftarrow 1$\; $\lambda(1) \leftarrow 1$\; $Y^W \leftarrow W$\;
    $Y^C_j \leftarrow C_j$, $\forall j$\; 
\While{flag $=1$}{ Perform an alternating step optimization by alternating
  between $\mathcal{C}$ and $W$ using the accelerated gradient descent
  method \cite{bubeck2014theory}\; $\lambda(\text{iter}+1) \leftarrow
  \frac{1+\sqrt{1+4\lambda(\text{iter})^2}}{2}$\; $\gamma(\text{iter})
  \leftarrow \frac{1-\lambda(\text{iter})}{\lambda(\text{iter}+1)}$\;
  $nY^W \leftarrow W - \eta*\frac{\nabla E}{\nabla W}$\; $W \leftarrow
  (1-\gamma(\text{iter}))nY^W + \gamma(\text{iter})Y^W$\; Using $W$
  update $C_j$ using the following steps, $\forall j$\; $nY^C_j
  \leftarrow Exp_{C_j} \left(-\eta \nabla E(C_j)\right)$\; $C_j
  \leftarrow
  Exp_{nY^C_j}\left(\gamma(\text{iter})Log_{nY^C_j}Y^C_j\right)$\;
       
       Recompute the objective function, $E$ using new $\mathcal{C}$
       and $W$, using Eq. \ref{eq15}\;
               
        \If{$|E - E^\text{old}| < \epsilon$}{
            {\it flag} $\leftarrow 0$\;
          }  
       
       $\text{iter} \leftarrow \text{iter} + 1$
        }
       
    \caption{{The SDL algorithm}}
    \label{alg1}
\end{algorithm}

Now, we present an algorithm for DL and SC on $\mathcal{P}_n$ that
will henceforth be labeled as the {\it information theoretic
  dictionary learning and sparse coding} (SDL) algorithm. We use an
alternating step optimization procedure, i.e., first learn $W$
with $\mathcal{C}$ held fixed, and then learn $\mathcal{C}$ with $W$
held fixed. We use the well known Nesterov's accelerated gradient
descent \cite{bubeck2014theory} adapted to Riemannian manifolds for
the optimization. The algorithm is summarized in the Algorithm block
\ref{alg1}.

In the algorithm, after the initialization steps up to line $13$, we do
an alternative step optimization between $\mathcal{C}$ and $W$. Line
$15$-$18$ are updates of $W$ using the accelerated gradient
descent. In line $20$, we use the Riemannian gradient descent to
map the gradient vector on the manifold (to get $nY^C$) using
Riemannian Exponential map ($Exp$) \cite{do1992riemannian}. Then, we
update $C_j$ using the Riemannian accelerated gradient descent steps
by first lifting $Y^C$ on to the tangent space anchored at $nY^C$ (using
Riemannian inverse exponential map ($Log$) and then map it on the
manifold using $Exp$ map. Then, we recompute the error using the
updated $\mathcal{C}$ and $W$ and then iterate.

\section{Experimental Results}\label{sec4}

In this section, we present experimental results on several real data
sets demonstrating the performance of our algorithm, the {\it SDL}. We
present two sets of experiments showing the performance in terms of
(1) reconstruction error and achieved sparsity on a statistical
manifold and, (2) classification accuracy and achieved sparsity on the
manifold of $n \times n$ SPD matrices, $P_n$. Though the objective of
a DL and SC algorithm is to minimize reconstruction error, due to the
common trend (in literature) of using classification accuracy as
measure, we report the classification accuracy measure on popular
datasets for data on $P_n$. But since the main thrust of the paper is
a novel DL and SC algorithm on a statistical manifold, we present
reconstruction error experiments in support of the algorithm
performances. All the experimental results reported here were obtained
on a desktop with a single 3.33 GHz Intel-i7 CPU with 24 GB RAM. We
did not compare our work with the algorithm proposed in \cite{Xie2013}
since for a moderately large data, their publicly available code makes
comparisons computationally infeasible.

\subsection{Experimental results on the statistical manifold}

In order to demonstrate the performance of {\it SDL} on {\it MNIST
  data} \cite{lecun1998gradient}, we randomly chose $100$ images from
each of the $10$ classes. We then represent each image as a
probability vector as follows. We consider the image graph $Z =
\left(x, y, I(x,y)\right)_{(x,y)}$ and take $Z$ as the random vector
in $\mathbf{R}^2 \times \mathbf{Z}_{+}$. The probability mass function
(p.m.f.) of $Z$ is given as: $\text{Pr}\left(Z =
\left(x_0,y_0,I(x_0,y_0)\right)\right) =
\frac{I(x_0,y_0)}{\sum_{x,y}I(x,y)}$. Now, each image is mapped as a
probability vector (or discrete density) and we use our formulation of
DL and SC to reconstruct the images. Note that, the reconstruction is
upto a scale factor.

In order to compare, we used two popular methods, namely (i) the K-SVD
based method in \cite{aharon2005k} (we chose number of atoms to be
twice the number of classes and chose $50\%$ sparsity) and (ii) the
Log-Euclidean sparse coding (LE-SC) method \cite{Guo2013}. Both of
these methods assume that the data lie in a vector space. As the
objective functions for these methods are different, hence we use {\it
  mean squared error} (MSE) as a metric to measure reconstruction
error. We also report the achieved sparsity by these methods.

\begin{table}
\begin{center}
\begin{tabular}{|c|c|c|c|c|c|}
\hline
\multicolumn{2}{|c|}{SDL} & \multicolumn{2}{c|}{K-SVD  \cite{aharon2005k}} & \multicolumn{2}{c|}{LE-SC \cite{Guo2013}} \\
\cline{1-6}
MSE  & $\varsigma (\%)$ & MSE & $\varsigma (\%)$ & MSE & $\varsigma (\%)$ \\
\hline
\textbf{0.052} & 97.8 & 0.070 & 98.3 & 0.098 & \textbf{98.5} \\
\hline
\end{tabular}
\end{center}
\caption{Comparison results on MNIST data}
\label{tab0}
\end{table}

From the Table \ref{tab0}, it is evident that, though K-SVD and LE-SC
perform better in terms of sparsity, SDL achieved the best
reconstruction error while retaining sparse atoms. Some reconstruction
results are also shown in Fig. \ref{fig0}.  The results clearly
indicate that SDL gives ``sharper'' reconstruction compared to the two 
competing methods. This is because the formulation of SDL respects
the geometry of the underlying data while other two methods do not.

\begin{figure}[!ht]
\centering
  \begin{minipage}[b]{0.45\textwidth}
  \centering
    \includegraphics[width=0.8\textwidth]{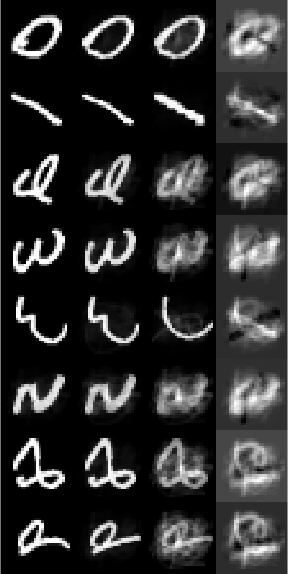}
  \end{minipage}
\caption{Reconstruction of MNIST data (left to right) (a)
  original data (b) SDL (c) K-SVD (d) LE-SC}
\label{fig0}
\end{figure}

\subsection{Experimental results on $P_n$}

Now, we will demonstrate the effectiveness of our proposed method {\it
  SDL} compared to the state-of-the-art algorithms on classification
using the SCs as features for the classification problem on the
manifold $P_n$ of SPD matrices. We report the classification accuracy
to measure the performance in the context of classification
experiments. Moreover, we also report a measure of sparsity, denoted
by $\varsigma$, which captures the percentage of the elements of $W$
that are $\leq 0.01$.  We performed comparisons to three
state-of-the-art methods namely, (i) Riemannian sparse coding for SPD
matrices (Riem-SC) \cite{Cherian2014}, (ii) Sparse coding using the
kernel defined by the symmetric Stein divergence (K-Stein-SC)
\cite{Harandi2012}, (iii) Log-Euclidean sparse coding (LE-SC)
\cite{Guo2013}. For the LE-SC, we used the highly cited SPAMS toolbox
\cite{mairal2009online} to perform the DL and SC on the tangent space.

We tested our algorithm on three commonly used (in this context) and
publicly available data sets namely, (i) the Brodatz texture data
\cite{brodatz1966textures}, (ii) the Yale ExtendedB face data
\cite{KCLee05}, and (iii) the ETH80 object recognition data
\cite{eth80}. The data sets are described in detail below. From each
of data set, we first extract $\mathcal{P}_n$ valued features. Then,
{\it SDL} learns the dictionary atoms and the sparse codes. Whereas,
for the Riem-SC and the kStein-SC, we used k-means on $\mathcal{P}_n$
and used the cluster centers as the dictionary atoms. For the
Log-Euclidean sparse coding, we used the Riemannian Inverse
Exponential map \cite{do1992riemannian} at the Fr{\'e}chet mean (FM)
of the data and performed a Euclidean DL and SC on the tangent space
at the FM. For classification, we used the $\nu-SVM$
\cite{scholkopf2002learning} on the sparse codes taken as the
features. The SVM parameters are learned using a cross-validation
scheme.  \ \\

\vspace{0.2cm} \ \\ {\bf Brodatz texture data:} This dataset contains
$111$ texture images. We used the same experimental setup as was used
in \cite{sivalingam2010tensor}. Each image is of $256 \times 256$
dimension and we first partitioned each image into $64$
non-overlapping blocks of size $32\times 32$. From each block, we
computed a $5\times 5$ covariance matrix $FF^t$, summing over the
block, where $F = (I, |\frac{\partial I}{\partial x}|, |\frac{\partial
  I}{\partial y}|, |\frac{\partial^2 I}{\partial x^2}|,
|\frac{\partial^2 I}{\partial y^2}|)^t$. The matrix $FF^t$ is
symmetric positive semidefinite. To make this matrix an SPD matrix, we
add $\sigma I$ to it, where $\sigma$ is a small positive real
number. Thus, the covariance descriptor from each image lies on
$\mathcal{P}_5$. For this data, we consider each image as a class,
resulting in a $111$ class classification problem. As DLM is
computationally very expensive, this $111$ class classification is
infeasible using this method, hence we also randomly selected $16$
texture images and performed classification on $16$ classes to
facilitate this comparison. We took the number of dictionary atoms
($r$) to be $555$ and $80$ for the $111$ classes and $16$ classes
respectively.
\begin{figure}[!ht]
\centering
  \begin{minipage}[b]{0.49\textwidth}
  \centering
    \includegraphics[width=\textwidth]{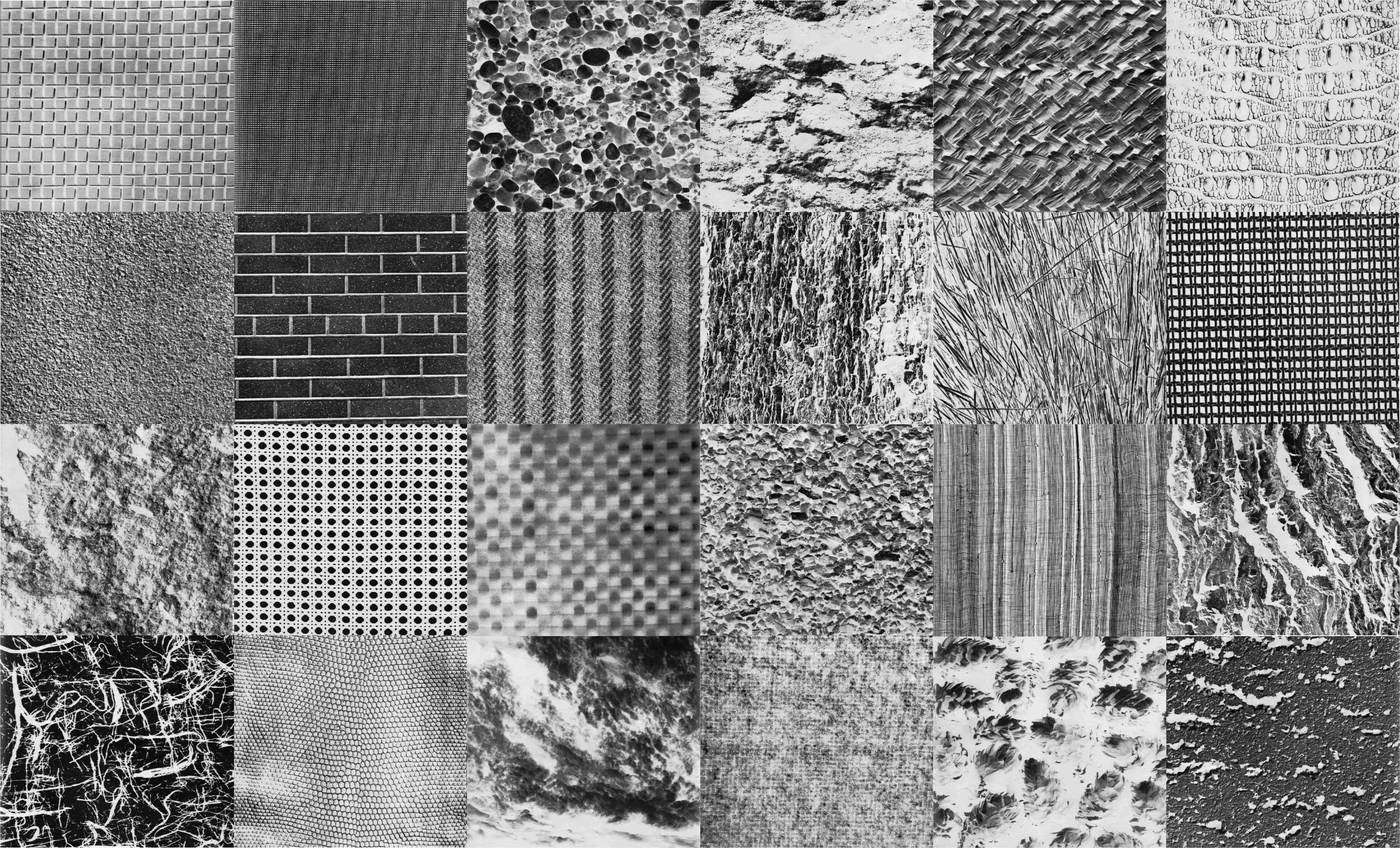}
  \end{minipage}
  \caption{Brodatz data samples. }
    \label{fig1}
    \end{figure}

\ \\
\vspace{0.2cm} \ \\ {\bf Yale face data:} This YaleExtendedB face data
set contains $16128$ face images acquired from $28$ human subjects
under varying pose and illumination conditions. We randomly fixed a
pose and for that pose consider all the illuminations, leading to
$252$ face images taken from $28$ human subjects. We used a similar
type of experimental setup as described in
\cite{chakraborty2015iccv}. From each face image, we construct a SIFT
descriptor \cite{sift} and take the first $4$ principal vectors of
this descriptor. Thus, each image is identified with a point on the
Grassmann manifold of appropriate dimension. And then, inspired by the
isometric mapping between the Grassmannian and $\mathcal{P}_n$
\cite{huang2015projection}, we construct the covariance descriptor
from the aforementioned principal vectors. Here, we used $84$
dictionary atoms.
\begin{figure}[!ht]
\centering
  \begin{minipage}[b]{0.49\textwidth}
  \centering
    \includegraphics[width=\textwidth]{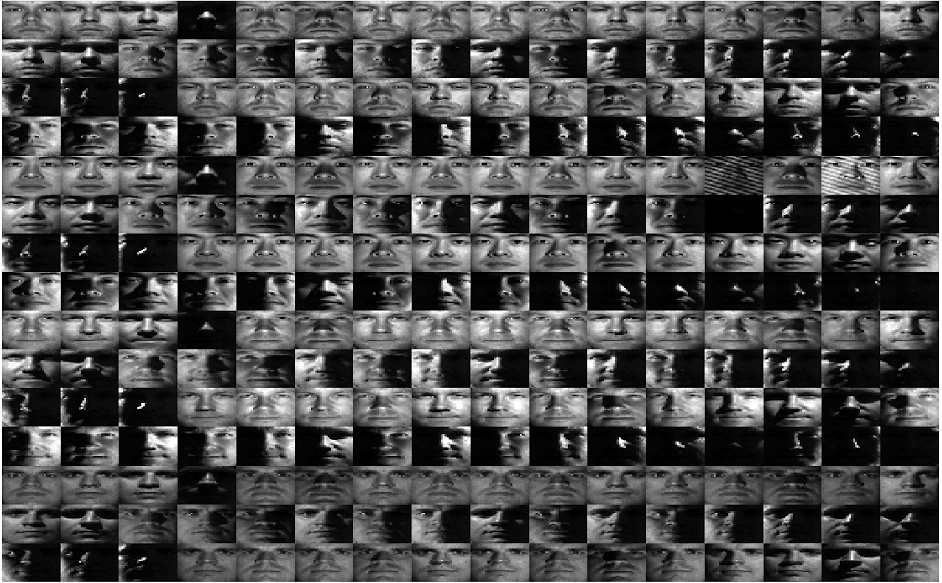}
  \end{minipage}
  \caption{Yale face data samples. }
 \label{fig2}
\end{figure}
  \ \\

{\bf ETH80 object recognition data:} This dataset contains $8$
different objects, each having $10$ different instances from $41$
different views resulting in $3280$ images. We first segment the
objects from each image using the provided ground truth. We used both
texture and edge features to construct the covariance matrix. For the
texture feature, we used three texture filters
\cite{laws1980rapid}. The filter bank is $[H_1H_1^t, H_2H_2^t,
  H_3H_3^t]$, where $H_1 = [1,2,1]^t$, $H_2 = [-1,0,1]^t$, $H_3 =
     [-1,2,-1]^t$. In addition to the three texture features, we used
     the image intensity gradient and the magnitude of the smoothed
     image using Laplacian of the Gaussian filter. We used $40$
     dictionary atoms for this data.
\begin{figure}[!ht]
\centering
  \begin{minipage}[b]{0.49\textwidth}
    \includegraphics[width=\textwidth]{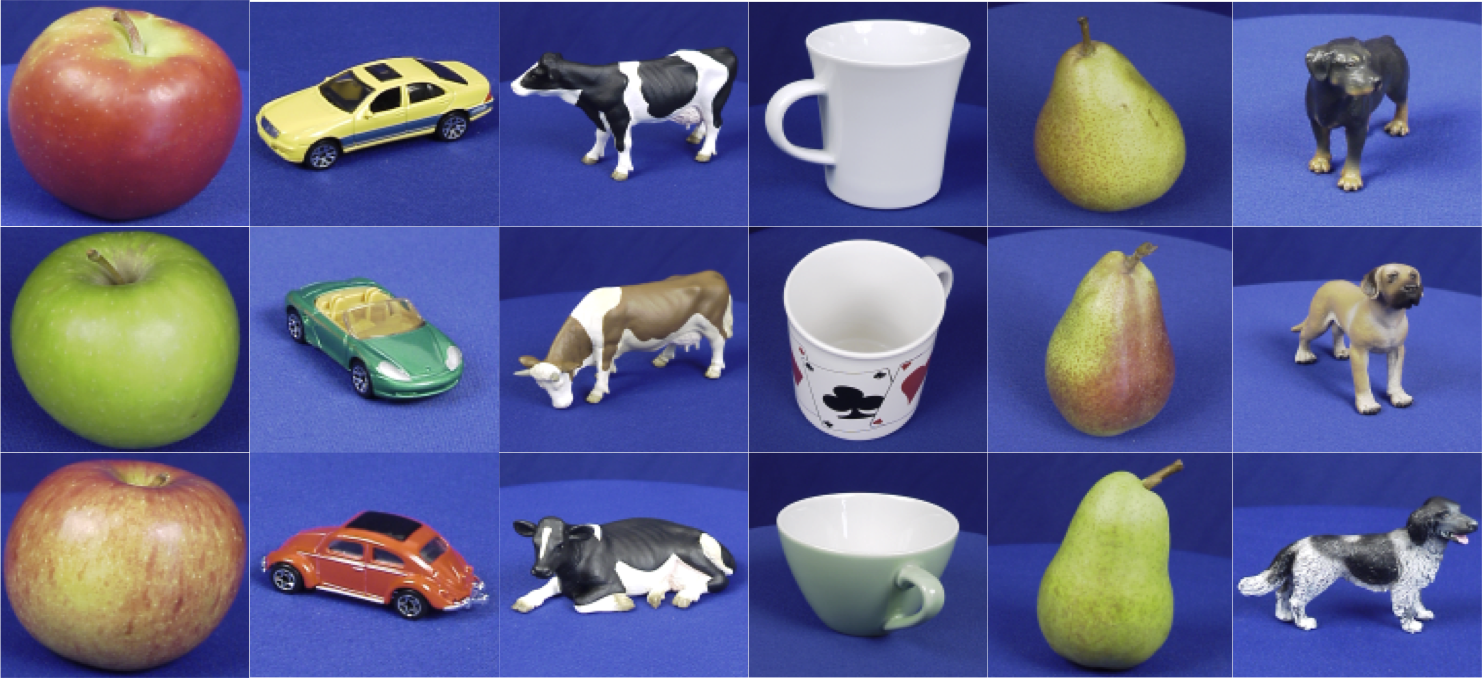}
  \end{minipage}
  \caption{Eth80 data samples. }
\label{fig3}
\end{figure}
  \ \\ Performance comparisons are depicted in Tables
  \ref{tab1}-\ref{tab2} respectively. All of these three methods are
  intrinsic, i.e., the DL and SC are tailored to the underlying
  manifold, i.e., $\mathcal{P}_n$. In order to compute the
  reconstruction error, we have used the intrinsic affine invariant
  metric on $\mathcal{P}_n$. From the tables, we can see that SDL
  yields the best sparsity amongst the three manifold-valued methods
  (excluding LE-SC). Furthermore, on the Yale-face data set, the SDL
  is computationally most efficient algorithm when compared to Riem-SC
  and kStein-SC respectively. In terms of reconstruction error, our
  proposed method outperforms it's competitors. Note that, for
  kStein-SC, computing the reconstruction error is not meaningful as
  they solved the DLSC problem on the Hilbert space after using a
  kernel mapping.

\begin{table*}
\begin{center}
\begin{tabular}{|c|c|c|c|c|c|c|c|c|}
\hline
\multirow{2}{1em}{Data} & \multicolumn{2}{c|}{SDL} & \multicolumn{2}{c|}{Riem-SC \cite{Cherian2014}} & \multicolumn{2}{c|}{kStein-SC \cite{Harandi2012}} & \multicolumn{2}{c|}{LE-SC \cite{Guo2013}} \\
\cline{2-9}
& acc. (\%) & $\varsigma (\%)$ & acc. (\%) & $\varsigma (\%)$ & acc. (\%) & $\varsigma (\%)$ & acc. (\%) & $\varsigma (\%)$ \\
\hline
Brodatz & \textbf{95.02} & 95.96  & 66.21 & 61.54 & 88.57 & 95.40 & 60.25 & \textbf{98.52} \\
Yale & \textbf{68.68} & 92.95 & 59.98 & 69.10 & 53.55 & 97.58 & 8.35 & \textbf{99.78} \\
Eth80 & \textbf{96.10} & 88.07 & 91.46 & 78.64 & 45.79 & 97.97 & 45.79 & \textbf{97.97} \\
\hline
\end{tabular}
\end{center}
\caption{Comparison results on three data sets in terms of classification accuracy and amount of sparsity}
\label{tab1}
\end{table*}

\begin{table*}
\begin{center}
\begin{tabular}{|c|c|c|c|c|c|c|c|c|}
\hline
\multirow{2}{1em}{Data} & \multicolumn{2}{c|}{SDL} & \multicolumn{2}{c|}{Riem-SC \cite{Cherian2014}} & \multicolumn{2}{c|}{kStein-SC \cite{Harandi2012}} & \multicolumn{2}{c|}{LE-SC \cite{Guo2013}}  \\
\cline{2-9}
& recon. err. & Time(s) & recon. err. & Time(s) & recon. err. & Time(s) & recon. err. & Time(s) \\
\hline
Brodatz & \textbf{0.55} & 599.33  & 1.24 & 539.63 & N/A & 527.57 & 3.21 & \textbf{5.46} \\
Yale & \textbf{0.001} & 47.25 & 0.005 & 293.91 & N/A & 131.97 & 0.25 & \textbf{9.16} \\
Eth80 & \textbf{0.005} & 153.60 & 0.017 & 240.96 & N/A & 213.93 & 0.16 & \textbf{3.62} \\
\hline

\end{tabular}
\end{center}
\caption{Comparison results on three data sets in terms of reonstruction error and computation time}
\label{tab2}
\end{table*}

We also depict the comparative performance as a function of number of
dictionary atoms, for the four algorithms in Fig. \ref{fig4} (for
Brodatz data) and in Fig. \ref{fig5} (for the Yale face data
set). Here, we have shown the comparative performance in terms of
classification accuracy, reconstruction error and required CPU
time. For both these data sets, we can see the superior performance of
SDL over it's competitors in terms of classification accuracy and
sparsity. As the objective of any DL algorithm is to reconstruct the
samples, we have also shown the reconstruction error thereby depicting
the competitive performance of SDL over the other algorithms.
\begin{figure*}
\centering
  \begin{minipage}[b]{0.32\textwidth}
    \includegraphics[width=\textwidth]{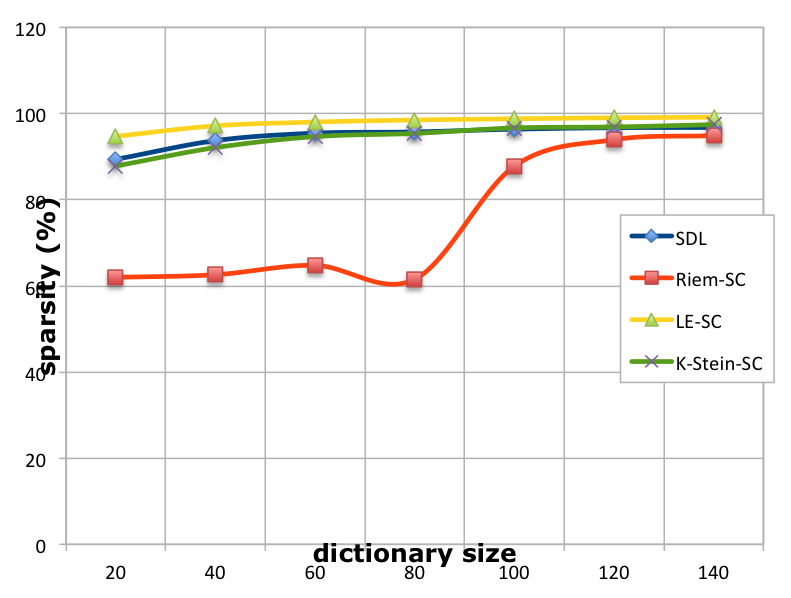}
  \end{minipage}
  \begin{minipage}[b]{0.32\textwidth}
    \includegraphics[width=\textwidth]{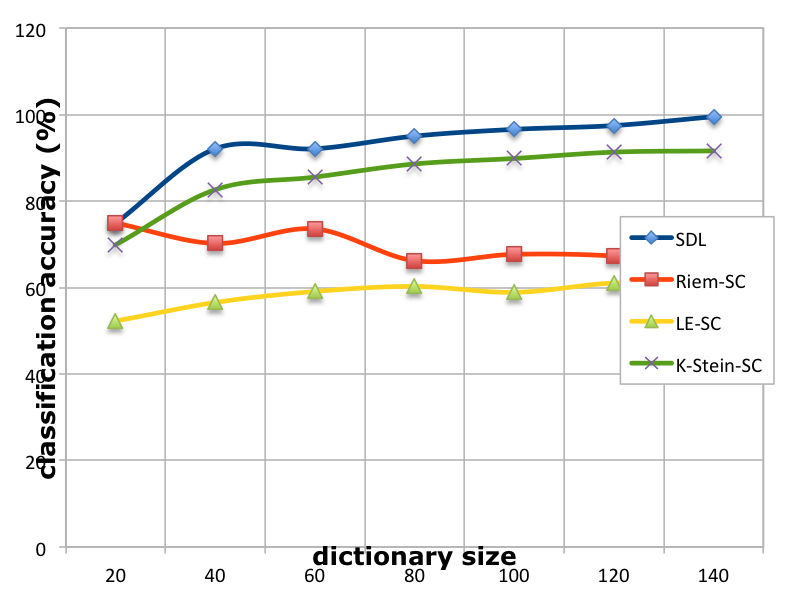}
  \end{minipage}
   \begin{minipage}[b]{0.32\textwidth}
    \includegraphics[width=\textwidth]{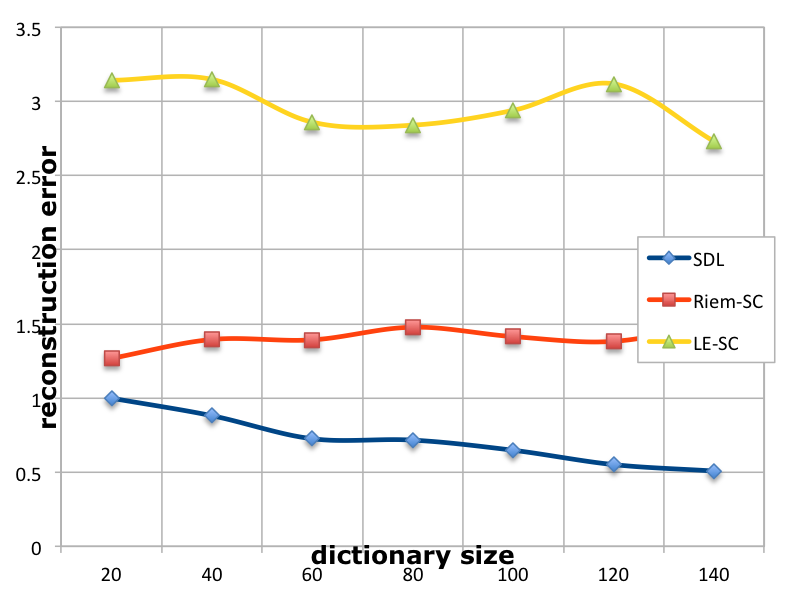}
  \end{minipage}
  \caption{Brodatz data results: \emph{Left: } Sparsity, \emph{Middle: } Classification accuracy and \emph{Right: } Reconstruction error with varying number of dictionary atoms.}
  \label{fig4} 
\end{figure*}

\begin{figure*}
\centering
  \begin{minipage}[b]{0.32\textwidth}
    \includegraphics[width=\textwidth]{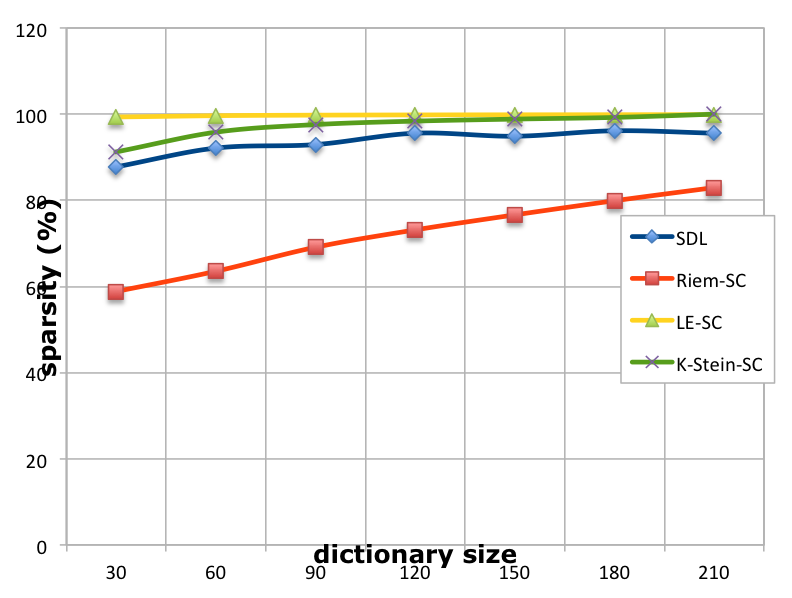}
  \end{minipage}
  \begin{minipage}[b]{0.32\textwidth}
    \includegraphics[width=\textwidth]{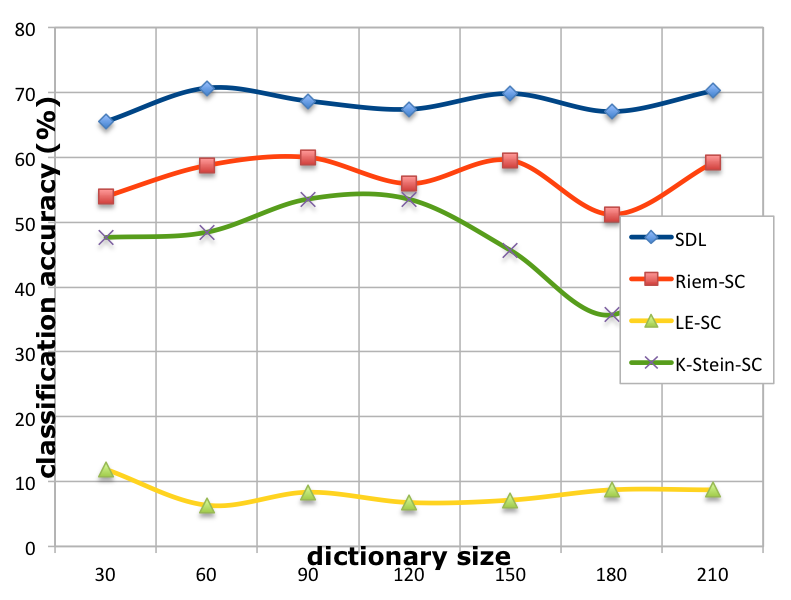}
  \end{minipage}
   \begin{minipage}[b]{0.32\textwidth}
    \includegraphics[width=\textwidth]{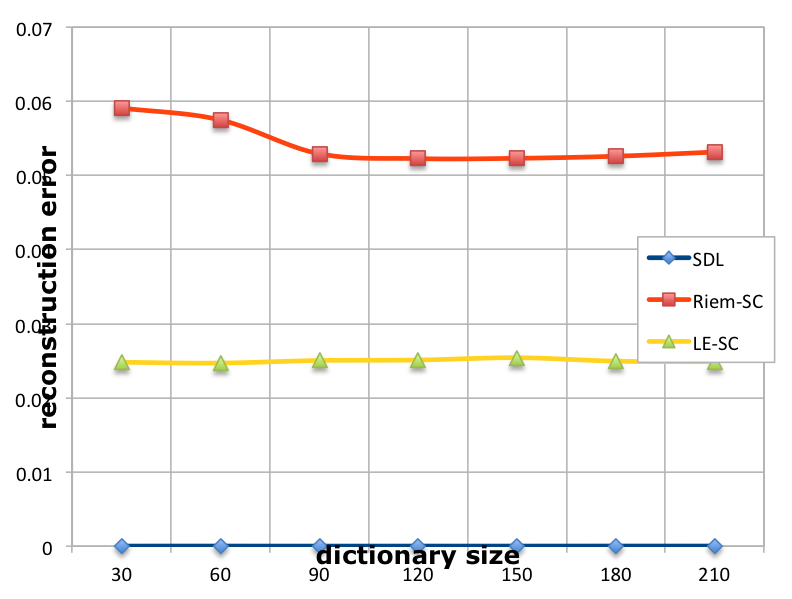}
  \end{minipage}
  \caption{Yale face data results: \emph{Left: } Sparsity, \emph{Middle: } Classification accuracy and \emph{Right: } Reconstruction error with varying number of dictionary atoms.}
  \label{fig5}
\end{figure*}

\section{Conclusions}\label{sec5}
In this paper, we presented an information theoretic dictionary
learning and sparse coding algorithm for data residing on a
statistical manifold. In the traditional dictionary learning approach
on a vector space, the goal is to express each data point as a sparse
linear combination of the dictionary atoms.  This is typically
achieved via the use of a data fidelity term and a term to induce
sparsity on the coefficients of the linear combination. In this paper,
we proposed an alternative formulation of the DL and SC problem for
data residing on statistical manifolds, where we do not have an
explicit sparsity constraint in our objective function. Our algorithm,
SDL, expresses each data point, which is a probability distribution,
as a weighted KL-center of the dictionary atoms. We presented a proof
that our proposed formulation yields sparsity without explicit
enforcement of this constraint and this result holds true when the
KL-divergence is replaced by the Hellinger distance between
probability densities. Further, we presented an extension of this
formulation to data residing on $\mathcal{P}_n$. A Riemannian
accelerated gradient descent algorithm was employed to learn the
dictionary atoms and an accelerated gradient descent algorithm was
employed to learn the sparse weights in a two stage alternating
optimization framework. The experimental results demonstrate the
effectiveness of the SDL algorithm in terms of reconstruction and
classification accuracy as well as sparsity.

\begin{center}
{\bf Acknowledgements}
\end{center}

This research was
    funded in part by the NSF grants IIS-1525431 and IIS-1724174 to
    BCV. We thank Dr. Shun-ichi Amari for his insightful comments on a
preliminary draft of this manuscript.

\bibliographystyle{splncs03}
\bibliography{references_used}
\end{document}